	\providecommand\BibTeX{{%
			\normalfont B\kern-0.5em{\scshape i\kern-0.25em b}\kern-0.8em\TeX}}}
\newtheorem{lemma}{Lemma}
\newtheorem{theorem}{Theorem}
\begin{document}
\title{Efficient Algorithms for Rotation Averaging Problems}
\author{Yihong Dong}

\email{1931543@tongji.edu.cn}
\author{Lunchen Xie}
\email{lcxie@tongji.edu.cn}
\author{Qingjiang Shi}
\authornote{Corresponding Author: Qingjiang Shi}
\email{shiqj@tongji.edu.cn}
\affiliation{%
	\institution{School of Software Engineering, Tongji University}
	\city{Shanghai}
	\country{China}
}

\setcopyright{acmcopyright}
\acmJournal{PACMCGIT}
\acmYear{2021} \acmVolume{4} \acmNumber{1} \acmArticle{} \acmMonth{5} \acmPrice{15.00}\acmDOI{10.1145/3451263}

\begin{abstract}
	The rotation averaging problem is a fundamental task in computer vision applications. It is generally very difficult to solve due to the nonconvex rotation constraints. While a sufficient optimality condition is available in the literature, there is a lack of a fast convergent algorithm to achieve stationary points. In this paper, by exploring the problem structure, we first propose a block coordinate descent (BCD)-based rotation averaging algorithm with guaranteed convergence to stationary points. Afterwards, we further propose an alternative rotation averaging algorithm by applying successive upper-bound minimization (SUM) method. The SUM-based rotation averaging algorithm can be implemented in parallel and thus is more suitable for addressing large-scale rotation averaging problems. Numerical examples verify that the proposed rotation averaging algorithms have superior convergence performance as compared to the state-of-the-art algorithm. Moreover, by checking the sufficient optimality condition, we find from extensive numerical experiments that the proposed two algorithms can achieve globally optimal solutions. 
\end{abstract}

\begin{CCSXML}
	<ccs2012>
	<concept>
	<concept_id>10010147.10010371.10010382</concept_id>
	<concept_desc>Computing methodologies~Image manipulation</concept_desc>
	<concept_significance>500</concept_significance>
	</concept>
	</ccs2012>
\end{CCSXML}

\ccsdesc[500]{Computing methodologies~Image manipulation}

\keywords{rotation averaging, reconstruction, BCD, SUM}

\maketitle

\section{Introduction}
The rotation averaging problem has been intensively studied through these years. The objective of the problem is to determine the absolute camera orientations given  a bunch of relative rotation estimates between pairs of poses. The problem has vast applications in computer vision, robotics, sensor networks and related areas. For example, in computer vision, this problem is used to produce camera orientations and camera locations from a set of images about one scene \cite{dai2009rotation}. By extracting and purifying such information, a 3D reconstruction can be then established, which is called the structure from motion problem \cite{koenderink1991affine}. Motivated by the vast number of applications, developing effective and efficient algorithms for the rotation averaging problem is of great importance in practice.

The mathematical problem of the rotation averaging problem is computing the absolute rotation matrices given the relative rotation matrices between cameras. This can be thought of as inferring a graph of cameras. Each camera is represented as a vertex of the graph, while each undirected edge $(i, j)$ represents a relative rotation measurement $R_{ij}$. The goal is to find absolute rotation matrix $R_i$ for every vertex $i$ such that $R_iR_{ij}$ = $R_j$ holds. In practice, due to the noise in measurements of relative rotations, one can only achieve $R_iR_{ij} \approx R_j$, and an optimization problem is usually solved to compute the absolute rotation matrices. The problem is considered to be troublesome because the rotation constraints on the rotation matrix are highly nonconvex \cite{wilson2016rotations}.

\begin{figure*}[htb]
	\setlength{\abovecaptionskip}{0.cm}
	\setlength{\belowcaptionskip}{-0.5cm}   
	\begin{center}
		\includegraphics[width=14cm]{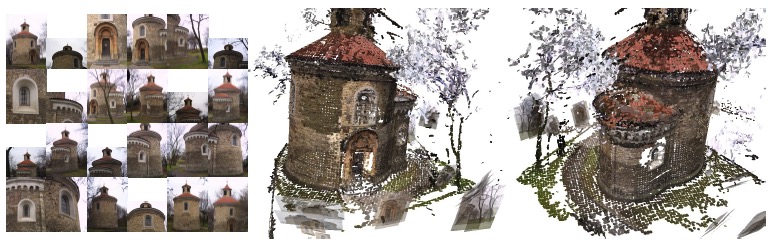}
	\end{center}
	\footnotesize{\qquad \quad (a) 24 out of 124 images \qquad (b) Front view of the dense reconstruction \quad (c) Back view of the dense reconstruction}
	\caption{Dense reconstruction of St. Martin rotunda with 124 images taken from different views \cite{martinec2007robust}. \copyright Daniel Martinec}
	\label{dense reconstruction}
\end{figure*}

In this paper, we present two fast and accurate approaches for computing the absolute rotation matrices. By defining the relative rotation measurement matrix $R_{ij}$, we first use Block Coordinate Descent method (BCD) to calculate the $R_{i}$ iteratively \cite{wright2015coordinate}. Since BCD can only be executed serially on different coordinates (different rotation matrices in our case), to further speed up the computation of absolute rotation matrices, we develop a Successive Upper-bound Minimization (SUM) based algorithm \cite{razaviyayn2013unified}. The SUM-based algorithm can run in parallel on $R_i$'s and the possible parallelization is achieved by a novel transformation of the original problem that decouples the objective functions and constraints on different $R_i$'s. Both of these algorithms are guaranteed to converge to stationary points and by numerically checking the global optimality condition in \cite{eriksson2018rotation}, we find that the algorithms also converge to global optimum in all our numerical examples. 

In summary, the main contribution of this paper is that we present a BCD-based and a SUM-based algorithm for the rotation averaging problem. The proposed algorithms significantly outperform the state-of-the-art method and are shown to find solutions satisfying the sufficient global optimality condition. 

\section{Related Work}
In 3D reconstruction, the common approach is to estimate the relative poses of cameras from pairs of images first, then estimate their absolute poses from the relative poses.  The task of estimating relative poses is usually formulated as optimization problems. The approaches for estimating the relative poses can be categorized into sequential methods or non-sequential methods, depending on the order of adding the camera points. Sequential methods may be severely affected by the choice of initial pair of images. In addition, the iteration process of adding new cameras is not stable as the order of camera will affect the final structure. The well-known BUNDLER \cite{snavely2006photo} is a typical and mature system of sequential methods. Though traditional methods will be severely affected by the short baseline problem (i.e., the estimation error of relative poses can be large when two cameras are too close to each other), it has been proven that even though short baseline will result in large error of translation, the rotation will still be well estimated. Thus, the reconstruction can be performed without concern for initial choice of image pairs. Non-sequential methods like \cite{enqvist2011non} can get rid of the limitations mentioned above. For example, the approach in \cite{enqvist2011non} perform cycle removal, then the graph edges are weighted and a maximum spanning tree (MST) can be extracted. We can estimate rotations via MST. But they may introduce other weaknesses, the method is highly dependent on the chosen MST, so only the correct tree leads to correct rotation estimation. The work \cite{rasmuson2020user} presents a user-guided system for accessible 3D reconstruction and modeling of real-world objects using multi-view stereo. 

For calculating absolute poses, before the thorough study on rotation matrix based methods, quaternions are usually used for such problems. By presenting rotation averaging problem with quaternions, relaxation about constraints on rotation parameterizations was introduced, the whole problem can be viewed as a linear homogeneous least squares problem \cite{govindu2001combining}. However such quaternions should be kept in length one, which introduces additional norm constraint. In some cases the norm may be far from one, thus resulting in poor estimation \cite{martinec2007robust}. Moreover, in order to obtain the result fast and directly, they solved the problem approximately by omitting orthogonality and determinant constraints (in other words, without enforcing rotational constraints) on the 3 by 3 rotation matrix.

By introducing rotation matrices, convexity properties of the single rotation averaging problem are given in \cite{hartley2010rotation}, making a generalization of the concept of geodesic convexity. This leads to nice characterizations of the convexity properties of cost functions used in rotation averaging. However, the results do not generalize to the case of multiple rotations, in other words, our research interest. Fortunately, it is covered in \cite{hartley2013rotation}. Thus, rotation averaging problem became more complicated due to the demand of enforcing non-convex rotation constraints. To overcome these obstacles, researchers took advantages of Lagrangian Duality \cite{fredriksson2012simultaneous,eriksson2018rotation}, which can turn non-convex problems into convex forms and handle constraints of rotation matrices. However, these approaches derived the solution in dual space or relaxed space. We aim to exploit the properties of rotation matrices to provide a direct solution.

It is noteworthy that most of the existing methods are serial algorithms that cannot exploit the benefit of parallel computation. In contrast to these serial algorithms, our algorithms include one that can take advantage of parallel computing, which can be significantly faster when the number of cameras is large. 

\section{Problem Definition}
This section formally presents the rotation averaging problem and its mathematical formulation. To be clear, let us start with the definition of rotation. The set of rotations about the origin in three dimensional Euclidean space is referred to as the Special Orthogonal Group, denoted as $SO(3)$ given by
\begin{equation}
SO(3)\triangleq\left\{R\in\mathbb{R}^{3\times3} \big| R^TR=I, det(R)=1 \right\}
\end{equation}
where $I$ is the three dimensional identity matrix and $det()$ is the matrix determinant. Using the above definition, the rotation averaging problem can be simply stated as follows. \emph{Given the estimated relative rotations $R_{ij}\in \mathbb{R}^{3\times3}$, $i,j=1,2,\ldots, n$, we want to recover $n$ absolute rotations $R_i\in SO(3)$, $i=1,2,\ldots n$, from the pairwise relation $R_iR_{ij}\approx R_j$} for some $i,j$. 

In the typical application of rotation averaging, multiview construction \cite{martinec2007robust}, the relative rotation is computed by massive images taken from different views followed by calculation of the absolute rotations and the reconstruction of the global view of some objects. An example is shown in Fig. \ref{dense reconstruction}.    




Generally, the relative rotations are obtained with estimation errors. To take the estimation errors into consideration, the rotation averaging problem is usually formulated as
\begin{equation}
\mathop {\min }\limits_{R_1,...,R_n\in SO(3)} \sum_{(i,j)\in E} dist(R_iR_{ij},R_j)^2\label{minimization problem}
\end{equation}
where $E$ denotes the set of pair $(i,j)$ for which the relative rotation is estimated, and $dist(A,B)$ represents a measure of distance between two rotations on $SO(3)$. Typical distance measures include angular distance (a.k.a. geodesic distance),  chordal distance, and quaternion distance, etc. \cite{andoni2015practical,sakai1994labeling,hanson2005visualizing}. 

In this paper, we restrict our attention to the commonly used chordal distance (equivalently in a least-square sense). That is, $dist(\cdot,\cdot)$ in \eqref{minimization problem} is taken to be the Frobenius norm, leading to 
\begin{equation}\label{eq:fnorm}
\mathop {\min }\limits_{R_1,...,R_n\in SO(3)} \sum_{(i,j)\in E} \big|\big|R_iR_{ij}-R_j\big|\big|^2_F.
\end{equation}

While problem \eqref{eq:fnorm} has simple quadratic objective function,  it is nonconvex and  extremely difficult to solve due to the difficult nonconvex constraints $SO(3)$. The work \cite{eriksson2018rotation} provided a \emph{sufficient optimality condition} for problem \eqref{eq:fnorm}, which we summarized in Theorem 1 with the definition
\begin{equation}  \label{Rtilde}
\begin{aligned}
&\tilde{R} \triangleq \left[
\begin{matrix}
0 & \tilde{R}_{12} & ... & \tilde{R}_{1n} \\
\tilde{R}_{21} & 0 & ... & \tilde{R}_{2n}\\
\vdots & &\ddots & \vdots\\
\tilde{R}_{n1} & \tilde{R}_{n2} & ... &\tilde{R}_{nn}
\end{matrix}
\right]  
\end{aligned}
\end{equation}
where $\tilde{R}_{ij} = R_{ij}$ if $(i,j)\in E$, otherwise $\tilde{R}_{ij}=0$.
\begin{theorem}[\cite{eriksson2018rotation}] \label{lemma6}
	A stationary point $\{R_i^*\}$ of the rotation averaging problem is also an optimum solution if it satisfies $\Lambda-\tilde{R}\succeq 0$ where 
	\begin{equation}  
	\begin{aligned}
	&\Lambda \triangleq \left[
	\begin{matrix}
	\Lambda_1 & 0 & ... & 0 \\
	0 & \Lambda_2 & ... & 0\\
	\vdots & &\ddots & \vdots\\
	0 & 0 & ... & \Lambda_n
	\end{matrix}
	\right]
	\end{aligned}
	\end{equation}
	with
	$\Lambda_i \triangleq \sum_{j\in \{j~|~ j\neq i, (i,j)\in E\}} \tilde{R}_{ij}R^{*T}_jR^*_i$. 
\end{theorem}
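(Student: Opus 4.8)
The plan is to certify global optimality by Lagrangian (semidefinite) duality: read off a primal feasible point of an SDP relaxation from the stationary point $\{R_i^*\}$, recognize the block-diagonal matrix $\Lambda$ as the matching dual feasible point, check that the two attain equal objective value, and then invoke weak duality together with the fact that $\{R_i^*\}$ is itself feasible for the original (rank-constrained) problem.

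First I would put \eqref{eq:fnorm} in trace form. Expanding the Frobenius norm and using $R_{ij}R_{ij}^T=I$ and $\|R_i\|_F^2=3$ gives $\|R_iR_{ij}-R_j\|_F^2 = 6-2\,\mathrm{tr}(R_iR_{ij}R_j^T)$, so \eqref{eq:fnorm} is equivalent, up to an additive constant and a sign, to $\max_{R_i\in SO(3)}\mathrm{tr}(\mathbf{R}^T\tilde R\,\mathbf{R})$, where $\mathbf{R}$ is the $3n\times 3$ matrix obtained by stacking the blocks $R_i^T$. Writing $Z=\mathbf{R}\mathbf{R}^T$ (so $Z\succeq 0$, $Z_{ii}=I$, $\mathrm{rank}\,Z=3$, and $\mathrm{tr}(\mathbf{R}^T\tilde R\mathbf{R})=\langle\tilde R,Z\rangle$) and dropping the only nonconvex constraint — the rank — yields the relaxation $p^\star_{\mathrm{SDP}}=\max\{\langle\tilde R,Z\rangle:\ Z\succeq 0,\ Z_{ii}=I\}$, whose value is at least the optimum $p^\star$ of the original problem. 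Introducing symmetric multiplier blocks $\Lambda_i$ for the constraints $Z_{ii}=I$ and maximizing the Lagrangian over $Z\succeq 0$ gives the dual $d^\star_{\mathrm{SDP}}=\min\{\sum_i\mathrm{tr}\,\Lambda_i:\ \Lambda=\mathrm{blkdiag}(\Lambda_i),\ \Lambda-\tilde R\succeq 0\}$, and weak duality gives $p^\star_{\mathrm{SDP}}\le d^\star_{\mathrm{SDP}}$.

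Next I would unpack the stationarity hypothesis. The Riemannian first-order condition for \eqref{eq:fnorm} on $SO(3)^n$ is that $\mathrm{skew}(R_i^{*T}\nabla_i)=0$ for each $i$, where $\nabla_i$ is the Euclidean partial gradient; a direct computation reduces this to the statement that $\sum_{j:(i,j)\in E}\tilde R_{ij}R_j^{*T}R_i^*$ — i.e., exactly the matrix $\Lambda_i$ of the theorem — is symmetric. Hence, under the hypothesis $\Lambda-\tilde R\succeq 0$, the matrix $\Lambda$ is dual feasible, while $Z^\star=\mathbf{R}^\star\mathbf{R}^{\star T}$ is primal feasible. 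A further short computation shows $\sum_i\mathrm{tr}\,\Lambda_i=\sum_{(i,j)\in E}\mathrm{tr}(R_{ij}R_j^{*T}R_i^*)=\langle\tilde R,Z^\star\rangle$, so the dual value at $\Lambda$ equals the primal value at $Z^\star$ (equivalently $\langle\Lambda-\tilde R,Z^\star\rangle=0$, i.e., complementary slackness holds). Chaining $\langle\tilde R,Z^\star\rangle\le p^\star_{\mathrm{SDP}}\le d^\star_{\mathrm{SDP}}\le\sum_i\mathrm{tr}\,\Lambda_i=\langle\tilde R,Z^\star\rangle$ forces equality throughout, so $Z^\star$ solves the SDP and $p^\star_{\mathrm{SDP}}=\langle\tilde R,Z^\star\rangle$. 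Since $Z^\star$ comes from genuine rotations, $\langle\tilde R,Z^\star\rangle$ equals (up to the same constant and sign) the original objective at $\{R_i^*\}$, and it equals $p^\star_{\mathrm{SDP}}\ge p^\star$; therefore $\{R_i^*\}$ is a global optimum of \eqref{eq:fnorm}.

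The step I expect to be most delicate is the bookkeeping that pins the theorem's $\Lambda$ to the SDP dual variable: choosing the stacking/transpose convention for $\mathbf{R}$ so that $\langle\tilde R,Z\rangle$ reproduces the objective with the right sign, keeping the undirected-edge / double-counting conventions consistent (this is where $\tilde R_{ji}=\tilde R_{ij}^T$, hence symmetry of $\tilde R$, enters), and verifying that the manifold stationarity condition is precisely the symmetry of each $\Lambda_i$. Once these identifications are fixed, the duality argument itself is routine. (That $Z_{ii}=I$, $Z\succeq 0$, $\mathrm{rank}\,Z=3$ also permits $\det R_i=-1$ is harmless here, since we only use that the SDP feasible set contains $SO(3)^n$.)
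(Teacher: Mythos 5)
The paper does not prove this theorem at all---it is imported verbatim from the cited reference, and Remark 1 merely notes that it ``was obtained via strong duality theory'' through the DD problem \eqref{RY}. Your proposal is a correct, self-contained reconstruction of exactly that argument: the SDP relaxation you write down is the DD problem, your $\Lambda$ is its dual certificate (with stationarity correctly identified as the source of the symmetry of each $\Lambda_i$), and the weak-duality/complementary-slackness chain is sound. So this matches the approach the paper alludes to rather than departing from it; no gaps.
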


\emph{\textbf{Remark 1}:
	Theorem 1 was obtained via strong duality theory by exploring the connection between the rotation averaging problem and its dual problem (or the dual of the dual problem, named DD problem for shorthand). It is useful for optimality verification of a given stationary point. To obtain a possibly stationary point, \cite{eriksson2018rotation} further developed a BCD-type algorithm for the DD problem in the form of} 
\begin{equation}
\begin{split}
& \min_{Y}~~ -tr(\tilde{R}Y)\\
& s.t.~Y\succeq 0, Y_{ii}=I, \forall i.
\end{split}\label{RY}
\end{equation}
with 
\begin{equation}  
\begin{aligned}
&Y \triangleq \left[
\begin{matrix}
Y_{11} & Y_{12} & ... & Y_{1n} \\
Y_{21} & Y_{22} & ... & Y_{2n}\\
\vdots & &\ddots & \vdots\\
Y_{n1} & Y_{n2} & ... &Y_{nn}
\end{matrix}
\right]  
\end{aligned}
\end{equation}
\emph{where each block $Y_{ij}$ is a $3$-by-$3$ matrix. Then a primal solution (i.e., rotation matrices) is recovered from the first block row of $Y$. However, since the constraint $Y\succeq0$ couples all block variables $Y_{ij}'s$, the convergence of their algorithm could be very slow as shown later in the simulations.} 

In what follows, differently from the BCD algorithm in \cite{eriksson2018rotation} developed for the dual of dual problem of the rotation averaging problem, we propose efficient algorithms directly for the rotation averaging problem \eqref{eq:fnorm}. The proposed algorithms can achieve stationary points of the rotation averaging problem. Given the stationary points, we invoke Theorem 1 to verify whether it is an optimal solution.

\section{Methodology}
In this section, based on block coordinate descent method and successive upper-bound minimization (SUM) method, we propose two efficient rotation averaging algorithms. The BCD-based algorithm runs in serial while the SUM-based algorithm can be implemented in parallel. Moreover, both algorithms have guaranteed convergence to stationary points of the rotation averaging problem.  

\subsection{Block Coordinate Descent}
BCD is an optimization method that successively minimizes along block coordinate directions to find the minimum of a function. Specifically, at each iteration, BCD method optimizes only one of the block variables while the rest of blocks are kept fixed. It generally works extremely efficiently once the single-block subproblem can be solved easily. It is observed that the $SO(3)$ constraints of problem \eqref{eq:fnorm} are separable across the rotation matrices. Hence, the BCD method naturally applies to the rotation averaging problem but with the single SO(3) constrained subproblem remained to be addressed.

To develop an efficient BCD-based rotation averaging algorithm, we further simplify the rotation averaging problem \eqref{eq:fnorm}. Since matrices $R_i$, $R_{ij}$ and $R_j$ are all orthonormal, both $\Vert R_iR_{ij}\Vert^2_F$ and $\Vert R_j\Vert^2_F$ are equal to $3$. Consequently, problem \eqref{eq:fnorm} can be equivalently simplified as follows
\begin{equation}\label{Inital problem}
\mathop {\min }\limits_{R_1,...,R_n\in SO(3)} -\sum_{(i,j)\in E} tr \left( R_iR_{ij}R^T_j  \right) .
\end{equation}
It is observed that if we only optimize the i-th rotation matrix $R_i$, the objective function is linear in $R_i$. This observation is useful in the development of BCD-based algorithms.

In the BCD method applied to \eqref{Inital problem}, while fixing other blocks, the $l$-th subproblem with respect to $R_l$  is given by
\begin{align}
	\begin{split}\label{sub-problem}
		&\min_{R_l \in \mathbb{R}^{3\times3}} \quad -\sum_{(i,j)\in E} tr \left( R_iR_{ij}R^T_j  \right)\\
		&s.t.\quad\quad R^T_lR_l = I, det(R_l)=1.
	\end{split}
\end{align} 
Define
\begin{equation}
A_l \triangleq -\sum_{(l,q)\in E} R_{lq}R^T_q  -\sum_{(p,l) \in E} R^T_{pl}R^T_p\label{A}
\end{equation}
where the two summations are taken over $p$ and $q$, respectively.
Then problem \eqref{sub-problem} can be equivalently written in a neat form as follows
\begin{align}
	\begin{split}\label{RA}
		&\min_{R_l \in \mathbb{R}^{3\times3}} \quad tr\left(A_lR_l\right) \\
		&s.t. \quad R^T_lR_l = I, det(R_l)=1.\\
	\end{split}
\end{align}
See Appendix A for detailed derivation of \eqref{RA} from \eqref{sub-problem}. In what follows, we show how the above problem can be globally solved. 

\subsubsection{Linear optimization with single SO(3) constraint}
The core of the BCD-based rotation averaging algorithm is solving problem \eqref{RA}, which we refer to as the problem of linear optimization with single $SO(3)$ constraint, named LOSSO for shorthand \cite{hartley2013rotation}.

For notational simplicity, we rewrite the LOSSO problem as follows
\begin{equation}\label{eq:LOSSO}
\begin{split}
& \min_{X\in\mathbb{R}^{3\times3}} ~tr(AX)\\
& ~~s.t.~X^TX=I, \det(X)=1.
\end{split}
\end{equation}
Note that due to the constraint $\det(X)=1$, the problem cannot be easily solved via matrix decomposition at the first glance. However, for a three dimensional matrix $X\in SO(3)$, it has a parametric expression, i.e., axis-angle representation, as follows \cite{taylor1994minimization}

\begin{align*}
	X(u,\theta)= 
	\left[
	\begin{matrix}
		\cos{\theta} +\delta u^2_1 & \delta u_1u_2-u_3\psi & \delta u_1u_3+u_2\psi\\
		\delta u_2u_1+u_3\psi & \cos{\theta} +\delta u^2_2 & \delta u_2u_3-u_1\psi \\
		\delta u_3u_1-u_2\psi & \delta u_3u_2+u_1\psi & \cos{\theta} +\delta u^2_3
	\end{matrix}
	\right]
\end{align*}
where $u\triangleq (u_1, u_2, u_3)$ satisfies $u^2_1+u^2_2+u^2_3=1$, $\delta \triangleq 1-\cos{\theta}$, and $\psi \triangleq\sin{\theta} $ with $-\pi\leq \theta \leq \pi$.
Hence, problem \eqref{eq:LOSSO} can be converted to an optimization problem with respect to $u$ and $\theta$, i.e.,
\begin{equation}\label{eq:LOSSO2}
\begin{split}
& \min_{u, \theta} ~tr(AX(u,\theta))\\
& ~~s.t.~u^2_1+u^2_2+u^2_3=1, -\pi\leq \theta\leq \pi.
\end{split}
\end{equation}
It can be seen that if $A$ is diagonal, then the problem becomes much easier. Therefore, let us first simplify the LOSSO problem \eqref{eq:LOSSO} and explore the solution structure which is stated in the following lemma.
\begin{lemma}\label{lemma2}
	Let $A=\hat{U}DV^T$ be the SVD of A. Suppose $\hat{D}$ and $\hat{V}$ fulfill the following conditions
	\begin{equation}  
	\left\{ 
	\begin{aligned}\label{eq:cond}
	&\hat{D} = -D, \hat{V} = -V, \quad if\,\, det(\hat{U})*det(V)=-1\\
	&\hat{D} = D, \hat{V} = V, \quad if\,\, det(\hat{U})*det(V)=1.
	\end{aligned}
	\right.  
	\end{equation} 
	Then $X=\hat{V}\hat{\Sigma}\hat{U}^T$ is an optimal solution to problem LOSSO, where 
	\begin{equation}\label{eq:LOSSO3}
	\begin{split}
	& \hat{\Sigma}\in\arg\min_{\Sigma\in\mathbb{R}^{3\times3}} ~tr(\hat{D}\Sigma)\\
	& ~~s.t.~\Sigma^T\Sigma=I, \det(\Sigma)=1.
	\end{split}
	\end{equation}
\end{lemma}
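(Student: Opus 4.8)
The plan is to reduce problem LOSSO \eqref{eq:LOSSO} to the ``diagonalized'' problem \eqref{eq:LOSSO3} by an orthogonal change of variables built from the SVD of $A$, and then transfer optimality back through that change of variables. The only delicate point is matching the determinant constraint $\det(X)=1$ under this substitution, which is precisely what the sign corrections in \eqref{eq:cond} are designed to handle; I expect this sign bookkeeping to be the main (though modest) obstacle.

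First I would show that in both cases of \eqref{eq:cond} one has the factorization $A = \hat{U}\hat{D}\hat{V}^T$ together with $\det(\hat{U})\det(\hat{V}) = 1$, and that $\hat{D}$ is still diagonal. When $\det(\hat{U})\det(V)=1$ this is immediate since $\hat{D}=D$ and $\hat{V}=V$. When $\det(\hat{U})\det(V)=-1$, I would use $V=-\hat{V}$ to write $A = \hat{U}D V^T = \hat{U}D(-\hat{V})^T = \hat{U}(-D)\hat{V}^T = \hat{U}\hat{D}\hat{V}^T$; moreover, since the matrices are $3\times 3$ and $3$ is odd, $\det(\hat{V}) = \det(-V) = -\det(V)$, so $\det(\hat{U})\det(\hat{V}) = -\det(\hat{U})\det(V) = 1$. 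The fact that $\hat{D}=\pm D$ is diagonal in either case is clear. This is the step where the oddness of the dimension is essential.

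Next I would introduce the change of variables $\Sigma \triangleq \hat{V}^T X \hat{U}$, equivalently $X = \hat{V}\Sigma\hat{U}^T$. Since $\hat{U}$ and $\hat{V}$ are orthogonal, $X^TX=I$ if and only if $\Sigma^T\Sigma=I$; and because $\det(\hat{U})\det(\hat{V})=1$ we get $\det(X) = \det(\hat{V})\det(\Sigma)\det(\hat{U}) = \det(\Sigma)$, so $\det(X)=1$ if and only if $\det(\Sigma)=1$. Hence $X\mapsto\Sigma$ is a bijection between the feasible set of \eqref{eq:LOSSO} and that of \eqref{eq:LOSSO3}. Using the cyclic property of the trace and $A=\hat{U}\hat{D}\hat{V}^T$, the objectives coincide: $tr(AX) = tr(\hat{U}\hat{D}\hat{V}^T X) = tr(\hat{D}\,\hat{V}^T X\hat{U}) = tr(\hat{D}\Sigma)$.

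Finally, the two problems are equivalent under this bijection (same feasible set up to the orthogonal conjugation, same objective value), so if $\hat{\Sigma}$ is a minimizer of \eqref{eq:LOSSO3}, which exists because $SO(3)$ is compact and the objective is linear, then $X = \hat{V}\hat{\Sigma}\hat{U}^T$ is a minimizer of LOSSO with the same optimal value. This would complete the proof; actually solving the diagonal problem \eqref{eq:LOSSO3} is a separate matter handled in the subsequent development.
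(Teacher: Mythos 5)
Your proposal is correct and follows essentially the same route as the paper's proof: establish $A=\hat{U}\hat{D}\hat{V}^T$ with $\det(\hat{U})\det(\hat{V})=1$, then use the orthogonal substitution $\Sigma=\hat{V}^TX\hat{U}$ together with the cyclic property of the trace to reduce LOSSO to the diagonal problem. Your explicit verification of the sign case via the oddness of the dimension is a detail the paper leaves implicit, but it is not a different argument.
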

\begin{proof}
	It suffices to show that the LOSSO problem can be reduced to \eqref{eq:LOSSO3}. Due to $A=\hat{U}DV^T$ and \eqref{eq:cond}, we have $A=\hat{U}\hat{D}\hat{V}^T$ and $det(\hat{U})\det(\hat{V})=1$. By further noting that $\hat{U}$ and $\hat{V}$ are orthogonal matrices, we obtain 
	$$X^TX=I\Longleftrightarrow\left(\hat{V}^TX\hat{U}\right)^T\hat{V}^TX\hat{U}=I,$$ 
	$$det(X)=1\Longleftrightarrow\det(\hat{V}^TX\hat{U})=1$$
	and
	$$tr(AX)=tr(\hat{U}\hat{D}\hat{V}^TX)=tr(\hat{D}\hat{V}^TX\hat{U})$$ where the second equality is due to the identity $tr(BC)=tr(CB)$. Therefore, the LOSSO problem can be equivalently written as
	\begin{equation}\label{eq:LOSSO4}
	\begin{split}
	& \min_{X\in\mathbb{R}^{3\times3}} ~tr(\hat{D}\hat{V}^TX\hat{U})\\
	& ~~s.t.~\left(\hat{V}^TX\hat{U}\right)^T\hat{V}^TX\hat{U}=I, \det(\hat{V}^TX\hat{U})=1.
	\end{split}
	\end{equation}
	which is further equivalent to \eqref{eq:LOSSO3} by variable substitution $\Sigma=\hat{V}^TX\hat{U}$. Furthermore, it is readily known that $X=\hat{V}\hat{\Sigma}\hat{U}^T$ is an optimal solution to the LOSSO problem. This completes the proof.
\end{proof}

According to the above results and analysis, we only need to determine $\hat{\Sigma}=X(u^*,\theta^*)$ by solving
\begin{equation}\label{eq:LOSSO5}
\begin{split}
& (u^*,\theta^*)=\arg\min_{u, \theta} ~tr(\hat{D}X(u,\theta))\\
& ~~s.t.~u^2_1+u^2_2+u^2_3=1, -\pi\leq \theta\leq \pi
\end{split}
\end{equation}
Since $\hat{D}$ is a diagonal matrix,  problem \eqref{eq:LOSSO5}  can be simplified as 
\begin{align}
	\begin{split}\label{rotation matrix}
		& (u^*,\theta^*)=\min_{u,\theta} \sum_{i=1}^3\sigma_i(\cos{\theta} +(1-\cos{\theta}) u^2_i) \\
		&\qquad s.t. ~u^2_1+u^2_2+u^2_3=1, -\pi\leq \theta\leq \pi.
	\end{split}
\end{align}
where $\sigma_i$ denotes the $i$-th diagonal element of  $\hat{D}$.

The optimal solution to problem \eqref{rotation matrix} can be derived in closed-form as follows. According to the fact of SVD, all the diagonal elements of $D$ are nonnegative. As a result, we have either \emph{case 1}: $\sigma_i\leq 0$, $\forall i$, or \emph{case 2}: $\sigma_i\geq 0$, $\forall i$. On the other hand,  it is noted that the term $\cos{\theta} +(1-\cos{\theta}) u^2_i$ must lie in between $-1$ and $1$. Therefore, for case 1 we can easily obtain the optimal solution $\theta^*=0$ and $u^*$ is arbitrary, yielding the minimum value $\sum_{i=1}^3\sigma_i$. While for case 2, we must have $\theta^*\neq 0$. With fixed $\theta^*\neq 0$, solving \eqref{rotation matrix} with respect to $u$ yields $u_{i_m}^{*2}=1$ and $u_i^*= 0$, $i\neq i_m$, where $i_m\triangleq\arg\min_{i=1,2,3} \sigma_i$. In this case, it is readily seen that minimizing the objective of \eqref{rotation matrix} is equivalent to minimizing $\sigma_{i_m}+\cos\theta\sum_{i\neq i_m} \sigma_i$, whose minimum is achieved when $cos\theta^* = -1$. In conclusion, for case 2 we can obtain the optimal solution to problem \eqref{rotation matrix} as $\theta^*=\pi$ and $u_{i_m}^*=1$, $u_i^*= 0$, $i\neq i_m$. Therefore, for both cases we can write $\hat{\Sigma}$ in a unified form as follows 
$\hat{\Sigma} = \left[
\begin{matrix}
2u_1^*-1 & 0 & 0 \\
0 & 2u_2^*-1 & 0 \\
0 & 0 & 2u_3^*-1
\end{matrix}
\right].$


According to the above results, we have the following key theorem.
\begin{theorem}
	Let $\hat{U}$, $\hat{D}$ and $\hat{V}$ be defined as in Lemma \ref{lemma2}, and $\sigma_i$ is the $i$-th element of $\hat{D}$ with $i_m\triangleq\arg\min_{i=1,2,3} \sigma_i$. Then $X=\hat{V}\hat{\Sigma}\hat{U}^T$ is an optimal solution to the LOSSO problem where 
	$$\hat{\Sigma} = \left[
	\begin{matrix}
	2u_1-1 & 0 & 0 \\
	0 & 2u_2-1 & 0 \\
	0 & 0 & 2u_3-1
	\end{matrix}
	\right]$$, with $u_i=1$, $\forall i$, if $\sigma_i\leq 0$, $\forall i$, and $u_{i_m}=1$, $u_i=0$, $\forall i\neq i_m$ if $\sigma_i \geq  0$, $\forall i$. Moreover, $\hat{D} = D$, if $\sigma_i \geq  0$, $\forall i$.
\end{theorem}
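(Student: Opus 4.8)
The plan is to obtain this theorem as a direct consolidation of Lemma~\ref{lemma2} with the closed-form analysis of the diagonalized subproblem carried out just above the statement. First I would apply Lemma~\ref{lemma2} to reduce the LOSSO problem \eqref{eq:LOSSO} to the diagonal problem \eqref{eq:LOSSO3}: it then remains only to show that the matrix $\hat\Sigma$ exhibited in the theorem solves $\min_{\Sigma}\ tr(\hat D\Sigma)$ subject to $\Sigma^T\Sigma=I,\ \det(\Sigma)=1$, after which $X=\hat V\hat\Sigma\hat U^T$ is optimal for \eqref{eq:LOSSO} via the substitution $\Sigma=\hat V^T X\hat U$ established in Lemma~\ref{lemma2}. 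Since every matrix in $SO(3)$ admits the axis--angle representation $X(u,\theta)$ with $\|u\|_2=1$ and $-\pi\le\theta\le\pi$, and $\hat D$ is diagonal, this in turn is equivalent to problem \eqref{rotation matrix}, namely minimizing $f(u,\theta)=\sum_{i=1}^3\sigma_i\big(\cos\theta+(1-\cos\theta)u_i^2\big)$ over the unit sphere in $u$ and $\theta\in[-\pi,\pi]$, with $\sigma_i$ the $i$-th diagonal entry of $\hat D$.

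Next I would perform the two-case split. The preliminary remark is that by the conditions \eqref{eq:cond} we have $\hat D=\pm D$, and since all singular values in $D$ are nonnegative, the entries of $\hat D$ are either all $\le 0$ (when $\det(\hat U)\det(V)=-1$, i.e., $\hat D=-D$) or all $\ge 0$ (when $\det(\hat U)\det(V)=1$, i.e., $\hat D=D$); this shows the dichotomy $\{\sigma_i\le 0\ \forall i\}$ versus $\{\sigma_i\ge 0\ \forall i\}$ is exhaustive and, in passing, proves the final assertion that $\hat D=D$ whenever $\sigma_i\ge 0$ for all $i$. In the first case, note $\cos\theta+(1-\cos\theta)u_i^2\le 1$ because it is a convex combination of $\cos\theta\le 1$ and $1$; multiplying by $\sigma_i\le 0$ and summing gives $f(u,\theta)\ge\sum_i\sigma_i$, with equality attained at $\theta^*=0$ (for any $u^*$, since then $X(u,0)=I$), and choosing $u^*=(1,1,1)$ for definiteness yields $\hat\Sigma=I$. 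In the second case I would fix an arbitrary $\theta\neq 0$ and minimize over $u$ first: since $1-\cos\theta>0$, the only $u$-dependent term $(1-\cos\theta)\sum_i\sigma_i u_i^2$ is minimized over the unit sphere by concentrating all mass at $i_m=\arg\min_i\sigma_i$, giving $u_{i_m}^{*2}=1$, $u_i^*=0$ otherwise; the residual $\sigma_{i_m}+\cos\theta\sum_{i\neq i_m}\sigma_i$ is then minimized over $\theta$ at $\cos\theta^*=-1$ because $\sum_{i\neq i_m}\sigma_i\ge 0$, and a direct comparison shows the boundary point $\theta=0$ (value $\sum_i\sigma_i$) is not better. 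Hence $\theta^*=\pi$, $u_{i_m}^*=1$, $u_i^*=0$ for $i\ne i_m$.

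Finally I would reassemble $\hat\Sigma=X(u^*,\theta^*)$. In both cases it is diagonal --- trivially so in case~1 where $\theta^*=0$, and in case~2 because every off-diagonal entry of $X(u,\theta)$ vanishes when $u$ has a single nonzero coordinate --- and its $i$-th diagonal entry $\cos\theta^*+(1-\cos\theta^*)u_i^{*2}$ equals $2u_i^*-1$ after using $u_i^*\in\{0,1\}$ (so $u_i^{*2}=u_i^*$) together with $\cos\theta^*=1$ in case~1 and $\cos\theta^*=-1$ in case~2. This is precisely the claimed unified form $\hat\Sigma=\mathrm{diag}(2u_1-1,2u_2-1,2u_3-1)$, and $\det\hat\Sigma=1$ holds in each case ($1\cdot1\cdot1$ versus $1\cdot(-1)\cdot(-1)$), so $\hat\Sigma$ is feasible and optimal for \eqref{eq:LOSSO3}. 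I expect the one step needing real care to be the joint minimization over $(u,\theta)$ in case~2: one must treat the degenerate locus $\theta=0$, where $X(u,\theta)$ is independent of $u$, separately, and argue that the sequential minimization --- over $u$ for each fixed $\theta\neq0$, then over $\theta$ --- actually produces the global minimizer rather than a spurious stationary point of the nonconvex objective $f$.
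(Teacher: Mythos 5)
Your proposal is correct and follows essentially the same route as the paper: reduce LOSSO to the diagonal problem via Lemma \ref{lemma2}, pass to the axis--angle parametrization \eqref{rotation matrix}, and split into the two sign cases for $\sigma_i$ (with $\theta^*=0$ in the nonpositive case and $\theta^*=\pi$, $u$ concentrated at $i_m$ in the nonnegative case), reassembling $\hat\Sigma=\mathrm{diag}(2u_i-1)$. The only differences are that you supply a few justifications the paper leaves implicit (the dichotomy via $\hat D=\pm D$, the explicit comparison with the $\theta=0$ boundary in case 2, and the feasibility check $\det\hat\Sigma=1$), which only strengthen the argument.
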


Based on Theorem 1, we give the procedure for solving the LOSSO problem in Algorithm 1, which is frequently called in both BCD-based and SUM-based rotation averaging algorithms. Furthermore, we present the BCD-based rotation averaging algorithm in Algorithm 2. This algorithm has guaranteed convergence to stationary points of the rotation averaging problem \cite{bertsekas1997nonlinear}.
\begin{algorithm}[htp]  
	\caption{Decomposition-Based LOSSO Solution}  
	\label{algorithm1}  
	\begin{algorithmic}
		\Require  
		$A$
		\Ensure  
		$X$
		\State $U,D,V$ $\gets$ the result of SVD($A$);
		\If {$det(U) \times det(V)=1$}  
		\State $\hat{V} := V$;
		\State $\sigma_i$ $\gets$ the $i$-th element of $D$;
		\State $i_m := \arg\min_{i=1,2,3} \sigma_i$;
		\State $u_{i_m}=1$ and $u_i=0$, $\forall i\neq i_m$;
		\Else
		\State $\hat{V} := -V$;
		\State $u_i=1$, $\forall i$;
		\EndIf
		\State $\Sigma := \left[
		\begin{matrix}
		2u_1-1 & 0 & 0 \\
		0 & 2u_2-1 & 0 \\
		0 & 0 & 2u_3-1
		\end{matrix}
		\right]$;
		\State $X := \hat{V}\Sigma U^T$;
	\end{algorithmic}  
\end{algorithm}

\begin{algorithm}[htb]  
	\caption{BCD-Based Rotation Averaging Algorithm}  
	\label{algorithm2}  
	\begin{algorithmic}  
		\Require  
		$R_{ij},(i,j)\in E$
		\Ensure  
		$R^{(t)}_k, k=1,2,\ldots,n$
		\State Initialize  $\epsilon$, $R^{(0)}_k, \forall k$ and set $t := 0$; 
		\Do
		\State $t := t+1$;
		\For{$i \gets 1$ to $n$}
		\State compute $A^{(t)}_i$ via \eqref{A} based on $R^{(t-1)}_k, \forall k$;
		\State compute $R^{(t)}_i$ using Alg. \ref{algorithm1} with $A=A^{(t)}_i$;
		\EndFor
		\doWhile{$\sum_{k=1,2,\ldots,n} \frac{||R^{(t)}_k - R^{(t-1)}_k||}{||R^{(t)}_k||}  \ge \epsilon$}
		
	\end{algorithmic}  
\end{algorithm}

\subsection{Successive Upper-bound Minimization}
Clearly, the BCD-based rotation averaging algorithm runs in serial. However, for large-scale rotation averaging problems (i.e, $n$ is large), parallel algorithms are more desirable. Hence, by further exploring the structure of the rotation averaging problem, we propose a parallel rotation averaging algorithm.

Our parallel algorithm is based on SUM method \cite{razaviyayn2013unified}. For the rotation averaging problem with smooth objective, SUM works in the same way as the classical Majorization-Minimization method. At each iteration, the SUM updates the variable by successively minimizing either locally tight upper-bound or strictly convex local approximations of the objective function (also called surrogate function) \cite{razaviyayn2013unified}. Under mild conditions, the SUM algorithm is guaranteed to achieve convergence towards stationary solutions.


To proceed, let us define
\begin{equation}  
R = \left[R_1\quad R_2\quad ... \quad R_n \right].
\end{equation}
Moreover, recall the definition of $\tilde{R}$ in \eqref{Rtilde}. Then we can  recast problem \eqref{Inital problem} as follows
\begin{align}
	\begin{split}\label{RRR}
		&\min_{R} \quad -tr\left(R\tilde{R}R^T\right) \\
		&s.t. \quad\quad R_i\in SO(3), i=1,2,\ldots, n.
	\end{split}&
\end{align}
The key to SUM is finding a locally tight upper bound for the objective of problem \eqref{RRR}. To this end, we find a locally tight upper bound in the following lemma.
\begin{lemma} \label{UBlemma}
	Let $B \triangleq \mu I + \tilde{R}\succeq 0$ with some appropriate $\mu$. We have for any $ R, \Bar{R}\in   SO(3)^n$
	\begin{align}\label{upperbound}
		-tr\left(R\tilde{R}R^T\right)&=\mu tr(RR^T)-tr(RBR^T)\\
		\leq 3\mu n&-2tr\left(B\Bar{R}^TR\right) + tr\left(\Bar{R}B\Bar{R}^T\right)\nonumber
	\end{align}
\end{lemma}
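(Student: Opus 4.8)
The plan is to prove the identity first and then the inequality. For the equality, I would start from $B \triangleq \mu I + \tilde{R}$, so that $\tilde{R} = B - \mu I$, and substitute into $-tr(R\tilde{R}R^T)$ to get $-tr(R(B-\mu I)R^T) = \mu\, tr(RR^T) - tr(RBR^T)$. Then I would observe that since each $R_i \in SO(3)$, we have $R_i R_i^T = I$, hence $tr(RR^T) = \sum_{i=1}^n tr(R_i R_i^T) = 3n$, which will be used to simplify $\mu\,tr(RR^T)$ to the constant $3\mu n$ and also to pin down the constant term in the bound.

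The heart of the argument is the inequality $-tr(RBR^T) \le -2\,tr(B\bar R^T R) + tr(\bar R B \bar R^T)$, valid whenever $B \succeq 0$. I would prove this by completing the square: since $B \succeq 0$, it has a symmetric square root $B^{1/2}$, and then
\begin{equation*}
0 \le tr\big((R-\bar R)B(R-\bar R)^T\big) = tr\big((R-\bar R)B^{1/2}B^{1/2}(R-\bar R)^T\big) = \big\|(R-\bar R)B^{1/2}\big\|_F^2 .
\end{equation*}
Expanding the left-hand side using linearity of trace and the cyclic property $tr(\bar R B R^T) = tr(R B \bar R^T) = tr(B\bar R^T R)$ gives
\begin{equation*}
tr(RBR^T) - 2\,tr(B\bar R^T R) + tr(\bar R B \bar R^T) \ge 0,
\end{equation*}
which rearranges exactly to the desired bound on $-tr(RBR^T)$. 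Adding $\mu\,tr(RR^T) = 3\mu n$ to both sides and combining with the equality step yields \eqref{upperbound}.

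The one genuine subtlety — the main obstacle — is the phrase ``with some appropriate $\mu$'': the construction requires $B = \mu I + \tilde{R} \succeq 0$, so I must justify that such a $\mu$ exists. This is immediate since $\tilde{R}$ is a fixed symmetric matrix (it is symmetric because $\tilde R_{ij} = R_{ij}$ and $\tilde R_{ji} = R_{ji} = R_{ij}^{-1} = R_{ij}^T$... actually one should simply note $\tilde R$ is symmetric as constructed, or at worst replace $\tilde R$ by its symmetrization which leaves the quadratic form $tr(R\tilde R R^T)$ unchanged), so taking $\mu \ge -\lambda_{\min}(\tilde R)$, e.g. $\mu = \|\tilde R\|_2$, makes $B \succeq 0$. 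I would also remark that this upper bound is \emph{locally tight}: setting $R = \bar R$ makes the inequality an equality (the Frobenius norm term vanishes), which is the property SUM requires of a surrogate. I should double-check the cyclic-trace manipulations carefully, since $R$ is $3 \times 3n$ and $B$ is $3n \times 3n$, so the products are rectangular and one must confirm the dimensions match in each $tr(\cdot)$ — but $RBR^T$ is $3\times 3$, $\bar R B \bar R^T$ is $3 \times 3$, and $B\bar R^T R$ is $3n \times 3n$, all well-defined, and the cyclic identity $tr(XY) = tr(YX)$ applies to each.
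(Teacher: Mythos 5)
Your proof is correct and follows essentially the same route as the paper: both reduce the claim to the fact that, because $B\succeq 0$, the quadratic $tr(RBR^T)$ lies above its linearization at $\bar R$ --- the paper simply asserts this via convexity, while you verify the identical inequality explicitly by expanding $tr\big((R-\bar R)B(R-\bar R)^T\big)\ge 0$. Your added remarks on the symmetry of $\tilde R$, the existence of a suitable $\mu$ (e.g.\ $\mu\ge-\lambda_{\min}(\tilde R)$, matching the choice in Algorithm 3), and the tightness at $R=\bar R$ are all correct and fill in details the paper leaves implicit.
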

\begin{proof}
	First, the equality in \eqref{upperbound} holds true obviously. Second, since the matrix $B$ is positive semidefinite, the function $tr(RBR^T)$ is convex in $R$. As a result, the inequality in \eqref{upperbound} follows directly from the facts that 1) $tr(RR^T)=\sum_{i=1}^n tr(R_iR_i^T)=3n$ and 2) the convex function $tr(RBR^T)$ is lower bounded at $\Bar{R}$ by its linear approximation $2tr\left(B\Bar{R}^TR\right) - tr\left(\Bar{R}B\Bar{R}^T\right)$. This completes the proof.
\end{proof}

Lemma \ref{UBlemma} shows that a quadratic function of $R$ can be upper bounded by a linear function of $R$ on the space defined by $SO(3)^n$. Using such a locally tight upper bound, we solve in each step of the SUM method the following problem
\begin{align}
	\begin{split}\label{BRR}
		&\min_{R} \quad Tr\left(A_{\bar{R}}^TR\right)\\
		&s.t. \quad\quad R\in SO(3)^n
	\end{split}&
\end{align}
where $A_{\bar{R}} = - \Bar{R}B^T$.
Note that the objective of the above problem is linear in $R$ and thus is separable across $R_i$, $i=1,2,\ldots,n$. Therefore, by writing $A_{\bar{R}}$ as
\begin{equation}
A_{\bar{R}}=\left[A_1\quad A_2\quad ... \quad A_n \right].
\end{equation}
we can decompose problem \eqref{BRR} into $n$ independent LOSSO problems ($i=1,2,\ldots,n$)
\begin{align}
	\begin{split}\label{CR}
		&\min_{R_i} \, tr\left(A^T_iR_i\right)\\
		&s.t. \quad R^T_iR_i = I, det(R_i)=1.
	\end{split}&
\end{align}
which can be easily solved using Algorithm 1.

The SUM-based rotation averaging algorithm is summarized in Algorithm 3. Similar to \cite{razaviyayn2013unified}, it can be shown that the SUM-based rotation averaging algorithm has guaranteed convergence to stationary solutions. 


\begin{algorithm}[htb]  
	\caption{SUM-Based Rotation Averaging Algorithm}
	\label{algorithm3}  
	\begin{algorithmic}  
		\Require  
		$\tilde{R}$
		\Ensure  
		$R^{(t)}$
		\State Initialize $\epsilon$, $R^{(0)}$ and set $t := 0$;
		\State $\lambda_{\min}$ $\gets$ the minimum eigenvalue of $\tilde{R}$;
		\State $\mu := |\min(\lambda_{\min}, 0)|$; 
		\State $B \gets \mu I +\tilde{R}$;
		\Do
		\State $t := t+1$;
		\State $A^{(t)}_{\bar{R}} := -R^{(t-1)}B$;
		\For{$i \gets 1$ to $n$}
		\State $A^{(t)}_i$ $\gets$ the $i$-th column block of $A^{(t)}_{\bar{R}}$;
		\State compute $R^{(t)}_i$ using Alg. \ref{algorithm1} with $A={A^{(t)}_i}^{T}$;
		\EndFor
		\doWhile{$\frac{||R^{(t)} - R^{(t-1)}||}{{||R^{(t)}||}} \ge \epsilon$}
		
	\end{algorithmic}  
\end{algorithm}

\begin{table*}[htb]
	\scriptsize
	\centering
	\caption{Comparison of resulting average errors, minimum eigenvalue and running times on synthetic data. 
		Here avg.error is the average objective value of \eqref{eq:fnorm} divided by $|E|$, $min(\lambda_{min})$ is the minimum of the smallest eigenvalues of $\Lambda-\tilde{R}$ (defined in Theorem 1) over 100 runs, which examines whether the solution satisfies the optimal condition.}
	\begin{tabular}{cccccccccccc}
		\toprule
		\multirow{1}{*}{} & \multirow{1}{*}{} & \multirow{1}{*}{} & \multicolumn{3}{c}{BCDSR \cite{eriksson2018rotation}} & \multicolumn{3}{c}{Alg. \ref{algorithm2} (BCD-Based)} & \multicolumn{3}{c}{Alg. \ref{algorithm3} (SUM-Based)} \\
		\cmidrule(r){4-6} \cmidrule(r){7-9} \cmidrule(r){10-12}
		$n$ & $\varphi[rad]$ & $p$
		&  avg.error      &  $min(\lambda_{min})$      &  time[s]
		&  avg.error      &  $min(\lambda_{min})$      &  time[s]
		&  avg.error      &  $min(\lambda_{min})$      &  time[s]  \\
		\midrule
		20            &0.2     &0     & 0.2102            & -3.46e-15             & 0.2390          & 0.2098            & -1.08e-16          & 0.0120  
		& 0.2098            & 7.06e-16           & 0.0069
		\\
		&     &0.3     & 0.2019            & -2.40e-14         
		& 0.2352          & 0.2018            & -9.26e-17          & 0.0124  
		& 0.2018            & -1.69e-16           & 0.0070
		\\
		&0.5   &0       & 0.5026            & -9.06e-15       &0.2438
		& 0.5013           & -6.88e-15          & 0.0126
		& 0.5013           & -1.43e-15          & 0.0095
		\\
		&     &0.3     & 0.5088            & 5.19e-15         
		& 0.2671          & 0.5086            & -2.79e-15          & 0.0124  
		& 0.5086            & -1.11e-14           & 0.0105
		\\
		200           &0.2     &0     & 0.2230            & -1.05e-11    & 59.22           & 0.2229          & -1.19e-14          & 0.1126  
		& 0.2229          & 2.12e-14           & 0.0622
		\\
		&     &0.3     & 0.2176            & -1.41e-13    & 55.82
		& 0.2163            & -2.90e-14          & 0.1245  
		& 0.2163            & 3.36e-15           & 0.0734
		\\
		&0.5   &0       & 0.5505            & -8.38e-12     & 55.72 
		& 0.5502         & -2.11e-14          & 0.1236
		& 0.5502         & 3.99e-16           & 0.0635
		\\
		&     &0.3     & 0.5513            & -2.04e-14    & 57.54          & 0.5539            & -4.17e-14          & 0.1482 
		& 0.5539            & -1.10e-15           & 0.0640
		\\
		2000          &0.2    &0      & 0.2409            & -127.11     & 15139.34
		& 0.2248         & 7.78e-14          & 9.53  
		& 0.2248         & -1.68e-13          & 2.54
		\\
		&     &0.3     & 0.2398            & -134.41    & 15278.67
		& 0.2232            & -2.49e-13          & 9.52  
		& 0.2232            & -3.00e-13           & 2.55
		\\
		&0.5    &0      & 0.5699            & -153.72    & 15374.61 
		& 0.5526         & -3.64e-13          & 9.64
		& 0.5526         & -2.74e-13          & 2.59
		\\
		&     &0.3     & 0.5783            & -156.04    & 15392.96 
		& 0.5569            & 3.49e-13          & 9.97 
		& 0.5569            & -6.42e-13           & 2.61
		\\
		\bottomrule
	\end{tabular}
	\label{tab:1}
\end{table*}

\begin{figure*}[htbp]
	\setlength{\abovecaptionskip}{0.cm}
	\setlength{\belowcaptionskip}{-0.5cm}   
	\centering 
	\subfigure[BCD-Based]{ 
		\includegraphics[height=6.3cm]{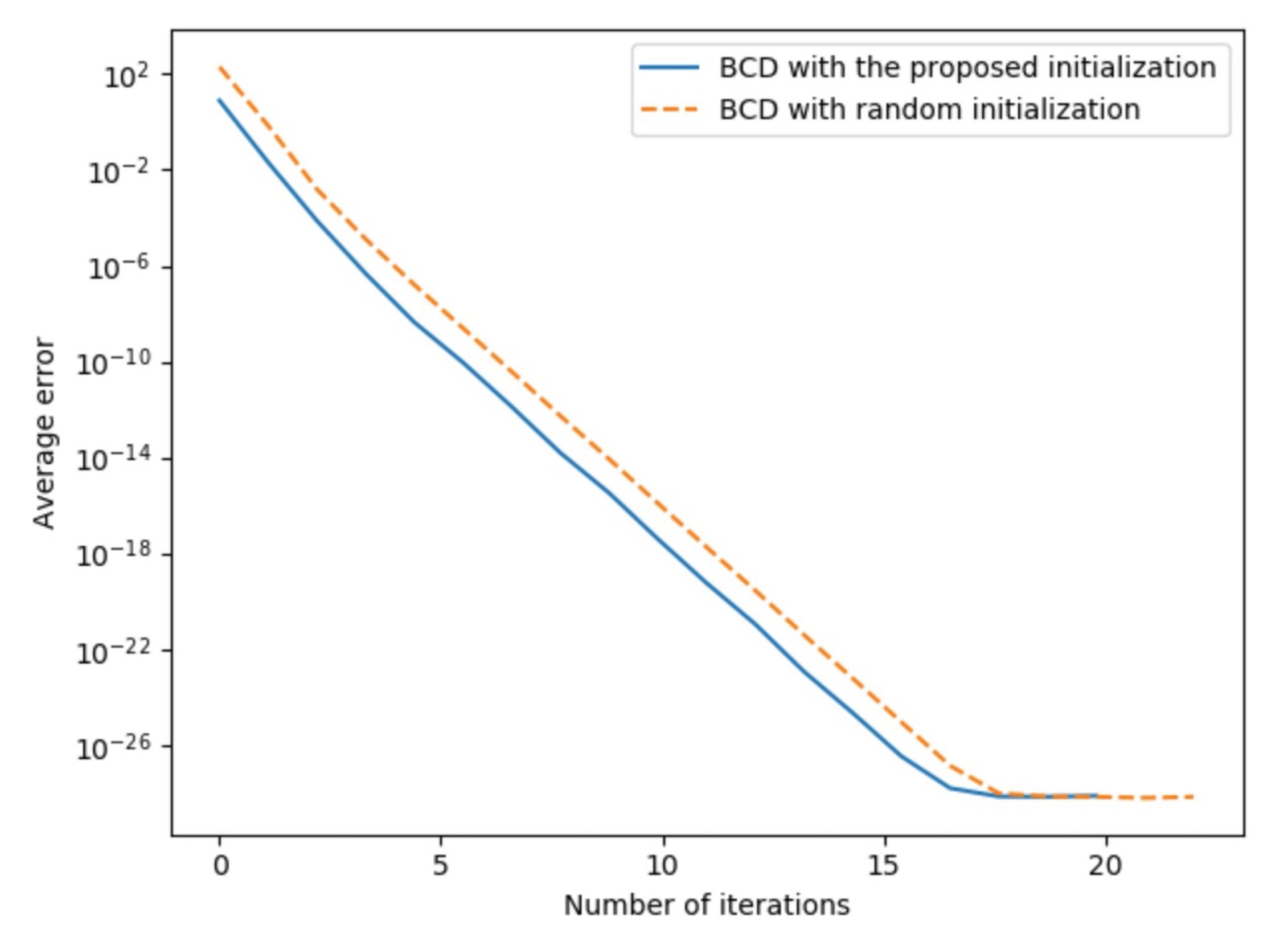} 
	} 
	\subfigure[SUM-Based]{ 
		\includegraphics[height=6.3cm]{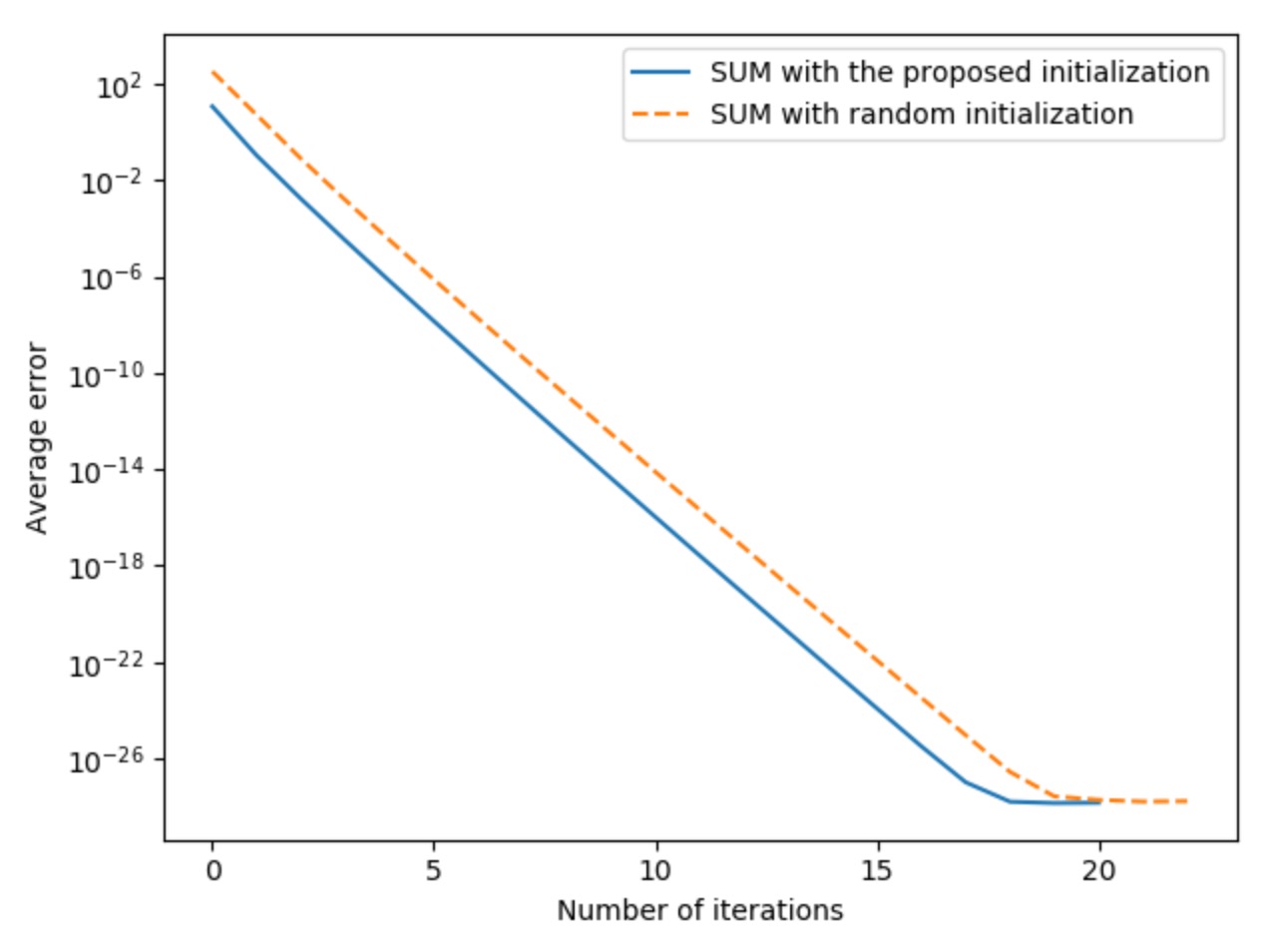} 
	} 
	\caption{Convergence examples of BCD-based and SUM-based rotation averaging algorithms with different initializations.}
	\label{3}
\end{figure*}

\subsection{{Globally Optimal Solution to the Noiseless Case}}
The above two algorithms require feasible initialization. In general, a good initialization is desirable because it could speed up the convergence. To this end, we propose a closed-form optimum solution to the noiseless case below, which can serve as a good feasible initialization for the proposed two algorithms  in the noisy case. We emphasize that to the best of our knowledge even in the noiseless case the rotation averaging problem \eqref{eq:fnorm} has not yet been globally solved in the literature.

Let us describe the \textbf{R}otation \textbf{A}veraging problem as an $n$-vertex connected graph $G = \left\{V, E\right\}$, named RA graph, where each vertex in $V$ represents an absolute rotation and each edge $(i,j)\in E$ corresponds to a relative rotation $R_{ij}$ if it exists. For a connected graph, there must exist a \emph{shortest path} linking each pair of vertices. Resorting to the concept of shortest path, we formally present in Lemma \ref{lemma7} a closed-form optimum solution to the noiseless case, which is a good feasible point for the two proposed rotation averaging algorithms above.

\begin{lemma} \label{lemma7} Let $j_i$ denote the $j$-th vertex on the shortest path of the RA graph from vertex $1$ to vertex $i$ with identification $1_i=1$ and $(m+1)_i=i$. Then $\hat{R}_1=I$, and $\hat{R}_i = R_{12_i}R_{2_i3_i}R_{3_i4_i}...R_{m_ii}$ is an optimum solution to the rotation averaging problem \eqref{eq:fnorm} in the noiseless case.
\end{lemma}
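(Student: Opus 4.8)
The plan is to exploit the fact that in the noiseless case the relative rotations are \emph{exactly} consistent, so the optimal value of \eqref{eq:fnorm} is zero; it then suffices to check that the proposed closed-form point is feasible and drives every summand to zero. First I would make the noiseless assumption precise: there exist ground-truth absolute rotations $R_1^\circ,\dots,R_n^\circ\in SO(3)$ with $R_i^\circ R_{ij}=R_j^\circ$ for every $(i,j)\in E$, equivalently $R_{ij}=(R_i^\circ)^T R_j^\circ$. Since each term $\|R_iR_{ij}-R_j\|_F^2$ in \eqref{eq:fnorm} is nonnegative, the optimal value is at least $0$, and the ground truth attains $0$. Hence it is enough to show that $\{\hat R_i\}$ is feasible and attains objective value $0$, i.e. $\hat R_i\in SO(3)$ and $\hat R_iR_{ij}=\hat R_j$ for all $(i,j)\in E$.

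Next, feasibility: each $\hat R_i$ is a finite product of matrices $R_{k_i(k+1)_i}$ taken along the shortest path, every consecutive pair on the path is an edge of $G$ and hence lies in $E$, so every factor is a genuine element of $SO(3)$; since $SO(3)$ is closed under multiplication, $\hat R_i\in SO(3)$, and trivially $\hat R_1=I\in SO(3)$.

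The crux is the telescoping identity $\hat R_i=(R_1^\circ)^T R_i^\circ$, which is independent of the particular shortest path chosen. Writing out $\hat R_i = R_{1_i2_i}R_{2_i3_i}\cdots R_{m_i(m+1)_i}$ and substituting $R_{k_i(k+1)_i}=(R_{k_i}^\circ)^T R_{(k+1)_i}^\circ$, the interior factors cancel pairwise, leaving $(R_{1_i}^\circ)^T R_{(m+1)_i}^\circ=(R_1^\circ)^T R_i^\circ$; in particular $\hat R_1=(R_1^\circ)^T R_1^\circ=I$, matching the statement. Then for any edge $(i,j)\in E$,
\[
\hat R_i R_{ij}=(R_1^\circ)^T R_i^\circ (R_i^\circ)^T R_j^\circ=(R_1^\circ)^T R_j^\circ=\hat R_j,
\]
so every summand of \eqref{eq:fnorm} vanishes, the objective equals $0$, which is the minimum, and therefore $\{\hat R_i\}$ is a global optimum.

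The only place where care is needed — hence the ``main obstacle'' — is the bookkeeping around the undirected nature of $E$ and the transpose conventions: one must note that the path may traverse an edge in either orientation, using $R_{ji}=R_{ij}^T$ in the noiseless case (which follows from $R_{ij}=(R_i^\circ)^T R_j^\circ$), and one must observe that the word ``shortest'' is used only to pin down \emph{some} path, the path-independence of the telescoping product making the particular choice immaterial (connectivity of $G$ guarantees a path exists). Once these conventions are fixed, the argument is a direct cancellation with no genuine analytic content.
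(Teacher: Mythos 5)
Your proof is correct and follows essentially the same route as the paper's: verify feasibility via closure of $SO(3)$ under products, use the telescoping identity $\hat R_i=(R_1^\circ)^TR_i^\circ$ coming from exact consistency of the noiseless measurements, and conclude that the objective value is zero, hence minimal. Your explicit handling of the undirected-edge orientation ($R_{ji}=R_{ij}^T$) is a small point of extra care that the paper's proof glosses over, but it does not change the argument.
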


See Appendix B for the detailed proof and toy examples.

\section{Experiments}
In this section, we empirically evaluate the accuracy and the computational efficiency of the proposed algorithms on a computer with 32 GB Memory and 2.4 GHz processor. We first introduce our experiment configuration on synthetic data. Next, we compare the proposed algorithms against the state-of-the-art method BCDSR \cite{eriksson2018rotation}. Then, the effects of the proposed special initialization are investigated. Finally, we show the performance of our SUM-Based algorithm on real-world data.
Note that all algorithms here are implemented in Python. 

\subsection{Synthetic Data}
We use complete graphs to construct synthetic data. In this part, we randomly generate a set of absolute rotation $R_i$ to form a complete graph. Each relative rotation is computed in the presence of noise. The noise are the random rotations obtained by an axis uniformly randomly sampled from the unit sphere and an angle generated from normal distribution with mean $0$ and variance $\varphi$. For addition, the parameter $p$ is used to control the sparsity of the generated graphs, meaning that only p percentage of relative rotations are used in rotation averaging. In the simulations, we set $\varphi$ to be $0.2$ radian or $0.5$ radian and $p$ to be 0 or 0.3 in our four sets of experiments as shown in Table \ref{tab:1}.

\subsubsection{Accuracy}
In Table \ref{tab:1}, we compare our methods with BCDSR \cite{eriksson2018rotation}. Each reported number is averaged over 100 runs except for $min(\lambda_{min})$. We can see that the convergence time of all the algorithms increases with $n$. In addition, the average errors of Alg.  \ref{algorithm2} and Alg. \ref{algorithm3} are slightly lower than that of BCDSR. Further, from this table, we also observe that we almost have $min(\lambda_{min}) \ge 0$ (the meaning of this term is explained in the caption of Table 1) for both Alg. \ref{algorithm2} and Alg. \ref{algorithm3} (note here that the extremely small negative numbers should be numerical errors), which imply that globally optima are reached in terms of the result of Theorem 1. In contrast, we have $min(\lambda_{min}) < 0$ for BCDSR algorithm when $n = 2000$, which is possibly due to either local convergence or that the algorithm terminates before the real convergence. Therefore, the proposed BCD-based and SUM-based algorithms yield better accuracy than the BCDSR.

\begin{figure}[htbp]
	\centering
	\includegraphics[height=6.2cm]{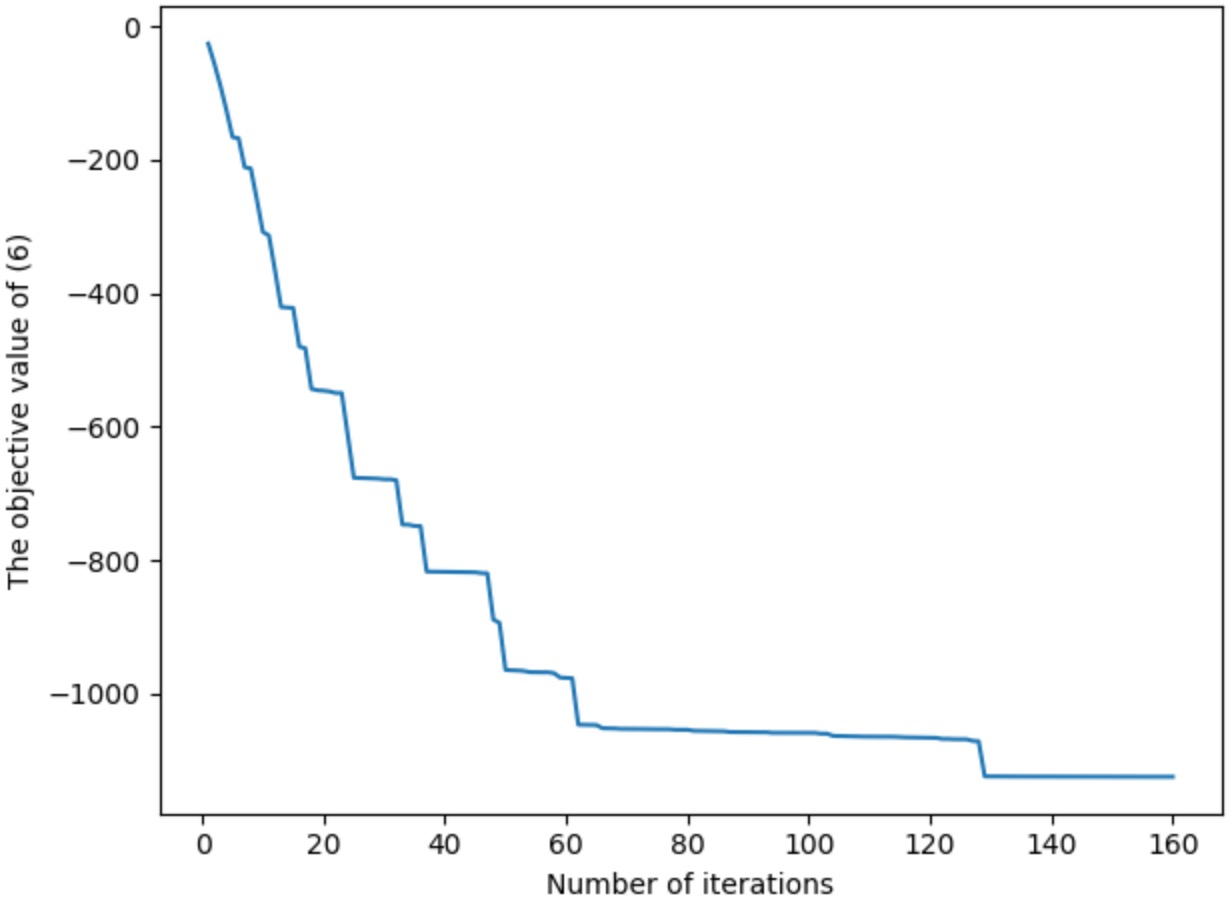}
	\caption{A convergence example of the BCDSR algorithm.}
	\label{4}
\end{figure}

\subsubsection{Computational Efficiency}
Figure \ref{3} illustrates the effect on convergence of using the proposed special initialization. It can be seen from Fig. \ref{3} that the proposed initialization can effectively speed up the convergence. Also, as shown in Fig. \ref{4}, BCDSR not only suffers from its slow convergence, the evolution of objective values of \eqref{RY} often plateaus during iterations, this makes it difficult to set convergence criterion for the algorithm. Moreover, although Alg. \ref{algorithm2} and Alg. \ref{algorithm3} have very similar convergence behaviors, they spend different time in each iteration. Thus, while it can be shown that the three algorithms have the same order computational complexity (i.e., $O(n^2)$), Algorithm \ref{algorithm3} performs  best among them in terms of cpu time.

\begin{table}[htb]
	\scriptsize
	\centering
	\caption{Comparison of resulting average errors and running times on real-world data. Here, avg.error and times are the same as those in Table \ref{tab:1}.}
	\begin{tabular}{cccccc}
		\toprule
		\multirow{1}{*}{} & \multirow{1}{*}{} &\multicolumn{2}{c}{BCDSR \cite{eriksson2018rotation}} &  \multicolumn{2}{c}{Alg. \ref{algorithm3} (SUM-Based)}\\
		\cmidrule(r){3-4} \cmidrule(r){5-6}
		Dataset  & $n$ 
		&  avg.error  &  time[s]
		&  avg.error &  time[s]\\
		\midrule
		ET\cite{snavely2006photo}            &9        & 1.40            & 0.298             & 0.760          & 0.021 \\
		Kermit\cite{snavely2006photo}        &11         & 1.23            & 0.681 
		
		& 0.958          & 0.027            
		\\
		Tsinghua Gate\cite{web}        &68         & 1.64            & 6.592 
		
		&1.64              & 0.483           
		\\
		Zhantan Temple\cite{web}       &158        & 1.67            & 34.148         
		& 1.60          & 0.596
		\\
		Fayu Temple\cite{web}       &290        & 1.65            & 104.49     
		
		& 1.56          & 1.96
		\\
		\bottomrule
	\end{tabular}
	\label{tab:2}
\end{table}

\subsection{Real-World Data}
In this set of experiments, the image collections are all obtained from public datasets \cite{snavely2006photo,web}. We use images to generate the relative rotations $R_{ij}$ via SIFT \cite{ng2003sift}. Then we run the proposed SUM algorithm and the BCDSR algorithm to recover the absolute rotations $R_i's$. Table 2 lists the computational results and the average time required by the two algorithms. It is observed that the proposed SUM algorithm outperforms  the BCDSR in both computational time and error. Particularly, as the problem size $n$ increases, the computational time gap between the two algorithms becomes widened. For example, when $n=290$, the computational time of the BCDSR is nearly 53 times that of the SUM algorithm. In addition, we use an open-source code bundler \cite{snavely2006photo} and the SUM-based algorithm to run some reconstruction examples. For instance, the reconstruction results of Tsinghua Gate and Zhantan Temple are presented in Fig. 4 (a) and Fig. 4 (b) respectively. It can be seen that the reconstruction results are quite good.

\begin{figure}[htbp]
	\setlength{\abovecaptionskip}{0.cm}
	\setlength{\belowcaptionskip}{-0.5cm}   
	\centering 
	\includegraphics[height=3.5cm]{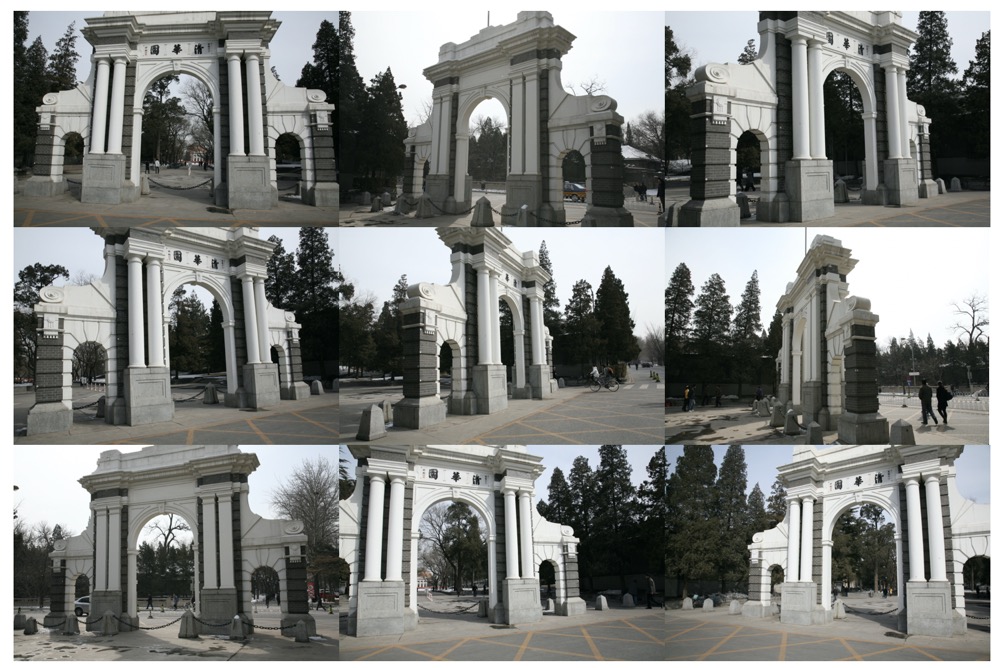}
	\includegraphics[height=3.5cm]{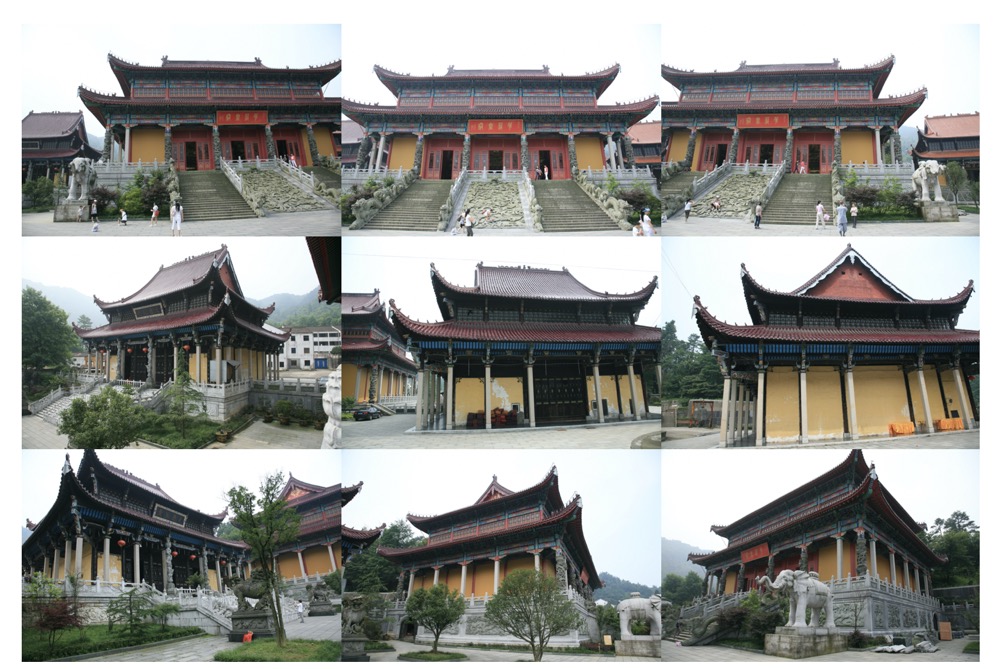} \\
	\includegraphics[height=3.6cm]{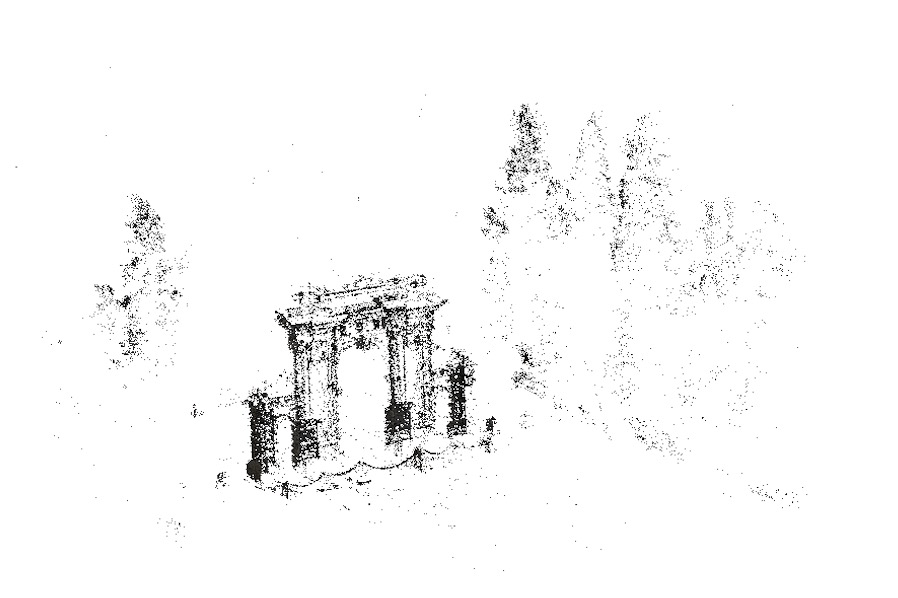}
	\includegraphics[height=3.6cm]{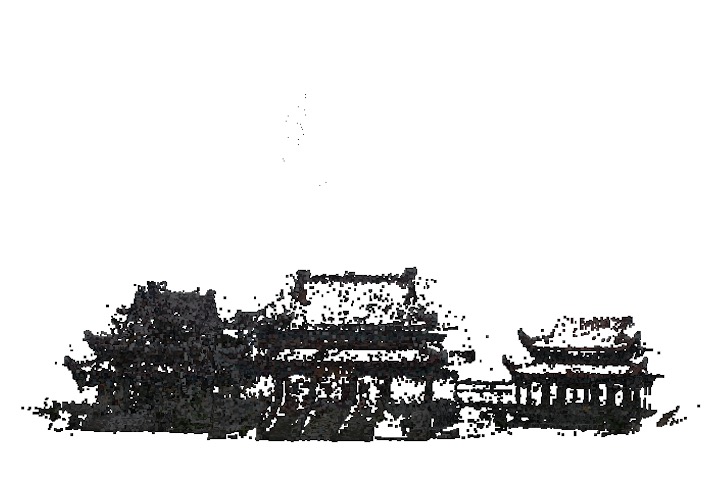}\\
	\quad (a) Tsinghua Gate \qquad (b) Zhantan Temple
	\caption{Images and reconstructions.}
	\label{5}
\end{figure}

\section{Conclusions}
In this paper, we have proposed a BCD-based algorithm and a SUM-based algorithm for the rotation averaging problem. Both algorithms significantly outperform the baseline algorithm. While these algorithms are guaranteed to converge to stationary points, our numerical results show that they actually reach global optima. In addition, the SUM-based algorithm can further benefit from parallel computation. Hence, the proposed algorithms significantly outperform the baseline algorithm.

\begin{acks}
	This work was supported by the National Key Research and Development Project under grant 2017YFE0119300.
\end{acks}

\bibliographystyle{ACM-Reference-Format}
\bibliography{acm}

\appendix
\section{Proof of Problem 11}
\setcounter{equation}{8}
\begin{align}
	\begin{split}\label{sub-problem}
		&\min_{R_l \in \mathbb{R}^{3\times3}} \quad -\sum_{(i,j)\in E} tr \left( R_iR_{ij}R^T_j  \right)\\
		&s.t.\quad\quad R^T_lR_l = I, det(R_l)=1.
	\end{split} 
\end{align} 

\begin{lemma}
	Assume that $(p,q)\in E$ and $p,q$ $ \neq l$ , define
	\begin{equation}
	A_l \triangleq -\sum_{(l,q)\in E} R_{lq}R^T_q  -\sum_{(p,l) \in E} R^T_{pl}R^T_p \label{definition}
	\end{equation}
	Afterwards, the subproblem \eqref{sub-problem} with respect to $R_l$ can be simplified to 
	\begin{align}
		\begin{split}
			&\min_{R_l \in \mathbb{R}^{3\times3}} \quad tr\left(A_lR_l\right) \\
			&s.t. \quad R^T_lR_l = I, det(R_l)=1.\\
		\end{split}& 
	\end{align} 
\end{lemma}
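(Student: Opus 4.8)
The plan is to isolate, within the objective $-\sum_{(i,j)\in E} tr(R_iR_{ij}R_j^T)$, exactly those summands that depend on the block $R_l$, discard everything else as an additive constant, and then rewrite the surviving terms in the canonical form $tr(A_lR_l)$ using nothing more than the cyclic invariance of the trace.

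First I would partition the edge set according to how the index $l$ appears. A summand $tr(R_iR_{ij}R_j^T)$ involves $R_l$ precisely when $i=l$ or $j=l$; the degenerate case $i=j=l$ never occurs since there are no self-loops. Collecting the first family as $\sum_{(l,q)\in E} tr(R_lR_{lq}R_q^T)$ and the second as $\sum_{(p,l)\in E} tr(R_pR_{pl}R_l^T)$, every remaining summand is free of $R_l$ and hence a constant in the $l$-th subproblem, which may be dropped without affecting the minimizer or the feasible set.

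Next I would bring each retained term into the shape $tr((\cdot)R_l)$. For the first family, $tr(BC)=tr(CB)$ gives $tr(R_lR_{lq}R_q^T)=tr(R_{lq}R_q^TR_l)$. For the second family, I would first use $tr(M)=tr(M^T)$ to get $tr(R_pR_{pl}R_l^T)=tr(R_lR_{pl}^TR_p^T)$ and then cycle once more to reach $tr(R_{pl}^TR_p^TR_l)$. Summing and factoring $R_l$ out on the right yields $-\sum_{(l,q)\in E} tr(R_{lq}R_q^TR_l)-\sum_{(p,l)\in E} tr(R_{pl}^TR_p^TR_l)=tr(A_lR_l)$ with $A_l$ exactly as defined in \eqref{definition}. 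Since the constraints $R_l^TR_l=I$, $\det(R_l)=1$ are left untouched, the subproblem \eqref{sub-problem} is equivalent to the stated LOSSO instance.

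There is no genuinely hard step here; the argument is pure bookkeeping on traces. The only point deserving a little care is the orientation convention on the edges: one must ensure each undirected relative measurement contributes a single term $tr(R_iR_{ij}R_j^T)$ for one fixed ordering of its endpoints, so that when $l$ is an endpoint it lands in exactly one of the two families above and its coefficient in $A_l$ is not inadvertently double counted. With that convention fixed, the reduction above is exact.
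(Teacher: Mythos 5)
Your proof is correct and follows essentially the same route as the paper: split the objective into the $(l,q)$ terms, the $(p,l)$ terms, and an $R_l$-independent constant, then use $tr(M)=tr(M^T)$ and cyclic invariance to collect everything into $tr(A_lR_l)$. The only cosmetic difference is that you factor $R_l$ on the right directly, while the paper first writes $tr(R_lA_l)$ and cycles once more at the end.
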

\begin{proof}
	Without loss of generality, let us first extract the $R_l$-related terms from the objective function \eqref{sub-problem}. Obviously, we can split the objective function into the following three parts
	\begin{align*}
		-\sum_{(i,j)\in E} tr \left( R_iR_{ij}R^T_j  \right)= -\sum_{(l,q)\in E} tr \left( R_lR_{lq}R^T_q  \right)\\ -\sum_{(p,l) \in E} tr \left( R_pR_{pl}R^T_l  \right) -\sum_{(p,q)\in E} tr \left( R_pR_{pq}R^T_q  \right).
	\end{align*}
	where the last part can be viewed as a constant because it is independent of $R_l$. Thus, the objective function can be simply expressed as follows
	\begin{equation*}
		-\sum_{(l,q)\in E} tr \left( R_lR_{lq}R^T_q  \right) -\sum_{(p,l) \in E} tr \left(R_pR_{pl}R^T_l \right)+C 
	\end{equation*}
	where $C$ is a constant. Further, due to the identity $trace(M) = trace\left(M^T\right)$, we have $trace\left(R_pR_{pl}R^T_l\right)=trace\left( \left( R_pR_{pl}R^T_l  \right)^T \right)$. Hence, we further express the objective function as
	\begin{align*}
		&-\sum_{(l,q)\in E} tr \left( R_lR_{lq}R^T_q  \right) -\sum_{(p,l) \in E} tr \left( \left( R_pR_{pl}R^T_l  \right)^T \right)+C\\
		&=-\sum_{(l,q)\in E} tr \left( R_lR_{lq}R^T_q  \right) -\sum_{(p,l) \in E} tr \left( R_lR^T_{pl}R^T_l\right)+C\\
		&=tr \left(R_l \left(-\sum_{(l,q)\in E} R_{lq}R^T_q -\sum_{(p,l) \in E} R^T_{pl}R^T_p \right) \right)+C
	\end{align*}
	By using the definition \eqref{definition}, the formula above can be expressed as follows
	\begin{equation*}
		tr(R_lA_l)+C  
	\end{equation*}
	Finally, since $tr\left(XY\right)=tr\left(YX\right)$, rewrite it as 
	\begin{equation*}
		tr(A_lR_l)+C  
	\end{equation*}
	$C$ is unrelated variable with regard to $R_l$. Hence, this completes the proof.
\end{proof}

\section{Proof of Lemma 3}
\setcounter{equation}{2}
\begin{equation} \label{RRRF}
\mathop {\min }\limits_{R_1,...,R_n\in SO(3)} \sum_{(i,j)\in E} \big|\big|R_iR_{ij}-R_j\big|\big|^2_F.
\end{equation}
\begin{lemma} Let $j_i$ denote the $j$-th vertex on the shortest path of the RA graph from vertex $1$ to vertex $i$ with identification $1_i=1$ and $(m+1)_i=i$. Then $\hat{R}_1=I$, and $\hat{R}_i = R_{12_i}R_{2_i3_i}R_{3_i4_i}...R_{m_ii}$ is an optimum solution to the rotation averaging problem \eqref{RRRF} in the noiseless case.
\end{lemma}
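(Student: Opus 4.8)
The plan is to show that in the noiseless case the optimal value of \eqref{RRRF} equals $0$, that it is attained exactly by those configurations with $R_iR_{ij}=R_j$ on every edge, and that the displayed $\hat R_i$ form one such configuration. First I would fix the meaning of ``noiseless'': there exist ground-truth rotations $R_1^{0},\dots,R_n^{0}\in SO(3)$ with $R_i^{0}R_{ij}=R_j^{0}$, i.e. $R_{ij}=(R_i^{0})^{T}R_j^{0}$, for all $(i,j)\in E$ (for an edge traversed in the opposite direction one uses $R_{ji}=R_{ij}^{T}=(R_j^{0})^{T}R_i^{0}$). Because each summand $\|R_iR_{ij}-R_j\|_F^{2}$ is nonnegative, the objective of \eqref{RRRF} is at least $0$, and it equals $0$ iff $R_iR_{ij}=R_j$ for every $(i,j)\in E$. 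Hence it suffices to check that $\hat R_1=I$ and $\hat R_i=R_{12_i}R_{2_i3_i}\cdots R_{m_ii}$ lie in $SO(3)$ --- which is clear, since $SO(3)$ is closed under products --- and satisfy this system of equalities.

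The heart of the argument is a telescoping identity. Substituting $R_{ij}=(R_i^{0})^{T}R_j^{0}$, the ordered product of relative rotations along any walk $v_0\!\to\! v_1\!\to\!\cdots\!\to\! v_k$ of $G$ collapses to $(R_{v_0}^{0})^{T}R_{v_k}^{0}$, since adjacent factors cancel. Two consequences follow. First, this product depends only on the endpoints, so $\hat R_i$ is well defined even when the shortest path from $1$ to $i$ is not unique. Second, $\hat R_i=(R_1^{0})^{T}R_i^{0}$ for every $i$, and in particular $\hat R_1=(R_1^{0})^{T}R_1^{0}=I$, matching the normalization; connectedness of the RA graph ensures that a path from $1$ to $i$ always exists. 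I would then finish by noting that for any $(i,j)\in E$, the shortest path from $1$ to $i$ followed by the edge $i\!\to\! j$ is itself a walk from $1$ to $j$, whence the telescoping identity gives $\hat R_iR_{ij}=(R_1^{0})^{T}R_i^{0}R_{ij}=(R_1^{0})^{T}R_j^{0}=\hat R_j$; so every summand of \eqref{RRRF} vanishes and $\{\hat R_i\}$ is a global optimum.

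I expect the only genuinely subtle point --- and the one a careful write-up must stress --- to be this path-independence. The definition of $\hat R_i$ only controls the product along one particular (shortest) path, whereas verifying optimality requires comparing $\hat R_iR_{ij}$, which corresponds to a possibly non-shortest walk from $1$ to $j$, against $\hat R_j$, defined along a different path to $j$; the equality is not automatic and hinges on the cycle-consistency of the relative rotations that the noiseless assumption supplies (equivalently, on the telescoping above). Everything else --- nonnegativity of the Frobenius terms, closure of $SO(3)$ under multiplication, existence of shortest paths by connectedness, and the fact that no uniqueness is claimed since the left gauge $\{GR_i^0\}$ leaves the objective invariant --- is routine. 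The toy examples promised for the appendix would illustrate the path composition on a small graph, e.g. a triangle and a small tree, to make the index notation $j_i$ concrete.
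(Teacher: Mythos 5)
Your proposal is correct and follows essentially the same route as the paper's proof: both assume the noiseless consistency $R_{ij}=\dot{R}_i^{T}\dot{R}_j$ (your $R_i^{0}$ playing the role of the paper's $\dot{R}_i$), verify that the products $\hat{R}_i$ lie in $SO(3)$, and telescope the product along the path to get $\hat{R}_i=\dot{R}_1^{T}\dot{R}_i$ and hence $\hat{R}_iR_{ij}=\hat{R}_j$ on every edge. Your explicit remarks on path-independence and on the objective being nonnegative with value $0$ exactly at edge-consistent configurations are points the paper leaves implicit, but they do not change the argument.
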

\begin{proof}
	Suppose that $\dot{R}_i, i=1,...,n$ is an optimum solution to the rotation averaging problem \eqref{RRRF} in the noiseless case. That is, it satisfies $\dot{R}_iR_{ij}=\dot{R}_j$, $\forall (i,j)\in E$, equivalently, $R_{ij}=\dot{R}_i^T\dot{R}_j$, $\forall (i,j)\in E$. Under this condition, it suffices to show that $\hat{R}_i$, $i=1,2,\ldots, n$, satisfies the $SO(3)^n$ constraint and $\hat{R}_iR_{ij}=\hat{R}_j$, $\forall (i,j)\in E$.
	
	First, it is not difficult to show that
	$$\hat{R}^T_1\hat{R}_1=I, det(\hat{R}_1)=1.$$
	and 
	\begin{align*}
		\begin{split}
			&\hat{R}^T_i\hat{R}_i = \left(R^T_{m_ii}...R^T_{12_i}\right)R_{12_i}...R_{m_ii} = I, \forall i\\
			&det(\hat{R}_i) = det\left(R_{12_i}...R_{m_ii}\right)\\&= det(R_{12_i})\times ...\times det(R_{m_ii})
			\\& = 1,\forall i
		\end{split}
	\end{align*}
	Second, by the definition, we have 
	\begin{align*}
		\begin{split}
			&\hat{R}_iR_{ij}=\left(R_{12_i}R_{2_i3_i}...R_{m_ii}\right)R_{ij}\\
			&=\dot{R}_1^T\dot{R}_j=R_{12_j}R_{2_j3_j}...R_{\ell_jj}=\hat{R}_j
		\end{split}
	\end{align*}
	where we have used the fact $R_{ij}=\dot{R}_i^T\dot{R}_j$, $\forall (i,j)\in E$ in the second and third equalities.
	This completes the proof.
\end{proof}

For better understanding Lemma 3, let us give two toy examples in Figure \ref{2}. In a complete graph, for each $i\neq 1$, there is an edge linking vertex $1$ and vertex $i$. In this case, $R_1=I$ and $R_i=R_{1i}$, $\forall i\neq 1$ constitute a globally optimal solution to the rotation averaging problem. In a cyclic graph, suppose without loss of generality that vertices are linked in a sequential order, i.e., $1,2,\ldots, n$. In this case, $R_1=I$ and $R_i=R_{12}R_{23}\ldots R_{(i-1)i}$, $\forall i\neq 1$ form a globally optimal solution.

\setcounter{figure}{4}
\begin{figure}[htbp]
	\setlength{\abovecaptionskip}{0.cm}
	\setlength{\belowcaptionskip}{-0.5cm}   
	\centering 
	\subfigure[complete graph]{ 
		\includegraphics[height=5.1cm]{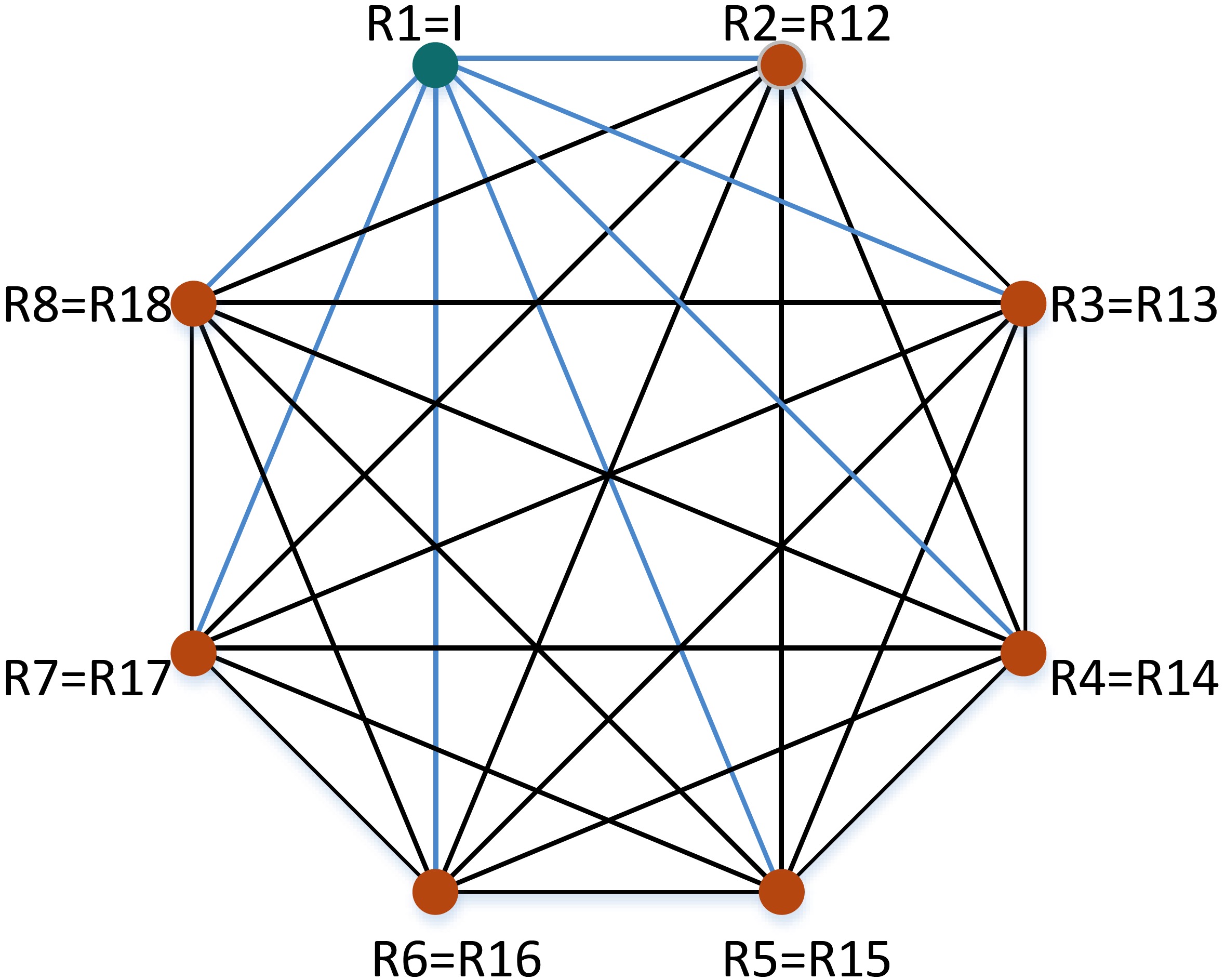} 
	} 
	\subfigure[cyclic graph]{ 
		\qquad \includegraphics[height=5.1cm]{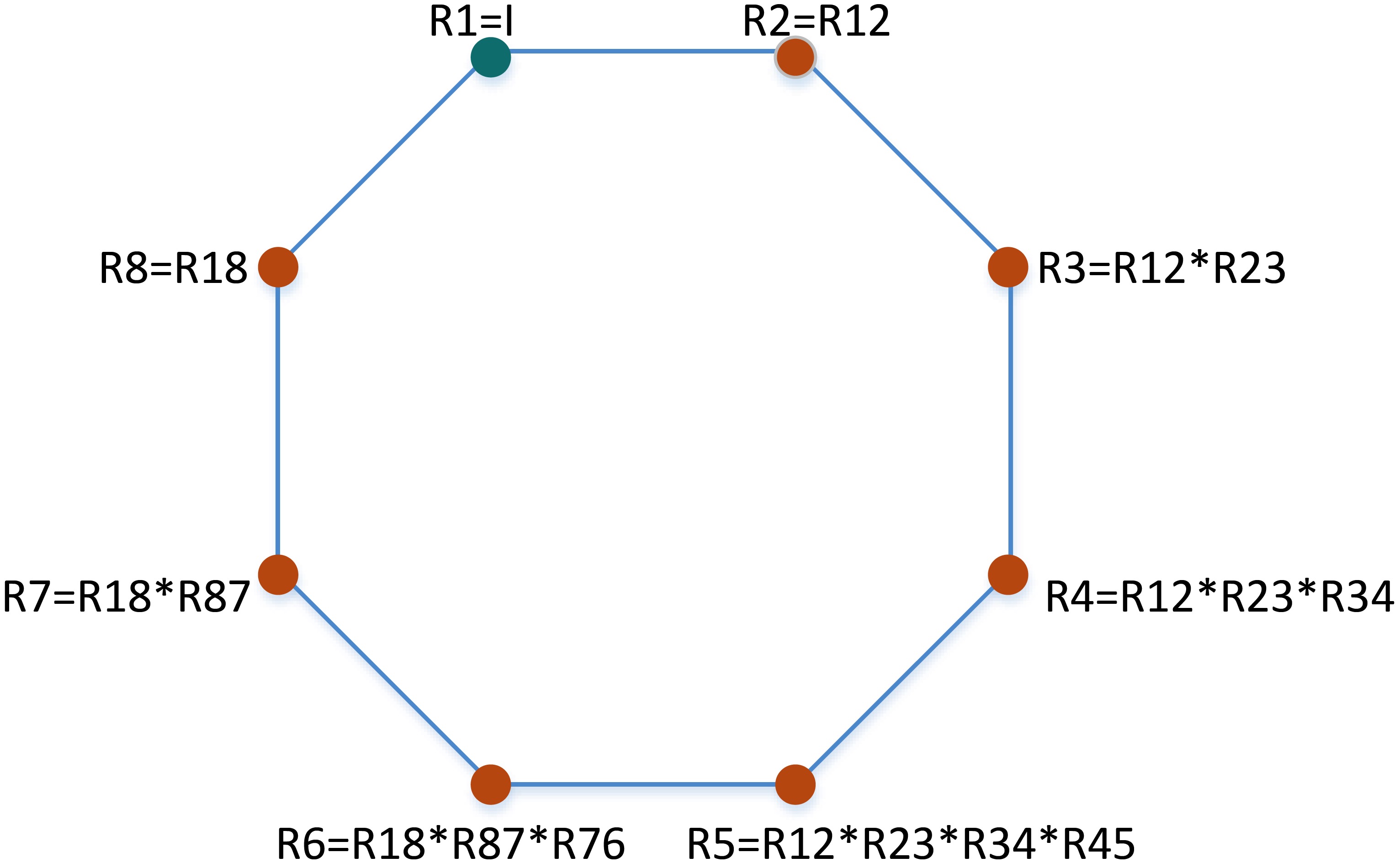} 
	} 
	\caption{The globally optimal solutions for complete graph and cyclic graph in the noiseless case.}
	\label{2}
\end{figure}

\end{document}